%% file: main.tex
\documentclass[11pt]{article}
\usepackage[margin=1in]{geometry}
\usepackage[expansion=false]{microtype}
\usepackage{amsmath,amssymb,amsthm}
\usepackage{booktabs,tabularx,array,makecell}
\usepackage{xcolor}
\usepackage{pgfplots}
\usepgfplotslibrary{groupplots}
\usetikzlibrary{arrows.meta,positioning,fit}
\pgfplotsset{compat=1.16}
\usepackage[numbers,sort&compress]{natbib}
\usepackage[colorlinks=true,linkcolor=blue!50!black,citecolor=blue!50!black,urlcolor=blue!50!black]{hyperref}

\input{numbers.tex}
\input{review_numbers.tex}
\input{review2_numbers.tex}

\input{review3_numbers.tex}
\input{review4_numbers.tex}

\newif\ifdraft \draftfalse
\newif\ifstagingcase \stagingcasetrue
\ifdraft
  
\else
  
\fi

\newtheorem{definition}{Definition}
\newtheorem{proposition}{Proposition}
\newcommand{\IES}{\mathrm{IES}}
\newcommand{\Agr}{A}
\newcommand{\one}{\mathbf{1}}
\newcommand{\pass}{\textsc{pass}}
\newcommand{\flag}{\textsc{flag}}
\newcommand{\abstain}{\textsc{abstain}}

\title{Does a Governed Confidence Score Know When a Text-to-SQL Answer Is Right?\\[4pt]
\large An exploratory study on BIRD and a pre-registered confirmation on Spider}
\author{Raghu Venkat \qquad Tricha Anjali\\
Numberz.ai Inc., Alpharetta, Georgia, USA\\
\texttt{\{raghu.venkat, tricha.anjali\}@numberz.ai}}
\date{October 2026 (version 2)}

\begin{document}
\maketitle

\ifdraft
\begin{center}
\fbox{\parbox{0.92\linewidth}{\small\textbf{Draft status.} Numbers in black come from
\texttt{numbers.tex}, which \texttt{make\_numbers.py} writes from result files checked against
the frozen scorer bundle \texttt{6e3e723e8f76711d} and their input hashes. Labels are
model-judged and the two-person hand check is not yet done; absolute accuracies are provisional
until it is.}}
\end{center}
\fi

\begin{abstract}
Conversational analytics lets people ask questions of records spread across many systems and get
answers in plain language. It is useful only if each answer can be trusted and explained. A
production system of this kind attaches a confidence score, the Interpretive Evidence Score (IES),
to every answer it writes over a database, and uses it to show the answer (\pass), show it with a warning
(\flag) or withhold it (\abstain). The score's weights, caps and thresholds were set by design.
This paper is an audit of that design. On the 1{,}534 questions of BIRD dev, answered by GPT-5.1
with a benchmark-adapted production prompt and five samples per question, the designed score separates right
from wrong answers weakly (AUROC \birdExpDesignedAuroc, 95\% CI \birdExpDesignedAurocCI) and
withholds \birdExpDesignedCorrectWithheld{} of correct answers. Taking the score apart showed that
nearly all of its ranking signal comes from agreement among the samples, and that most of it is
lost where agreement is combined by a minimum with a checklist that drops to zero on any
unsupported claim, including claims in commentary. Before any Spider
answer existed we froze and hashed a rule in which agreement with the shown answer is the score
and deterministic checks act as vetoes, with deterministic and judge-based caps, together with its thresholds, truncation policy,
exclusion rule and four hypotheses; the answering harness and prompts were rebuilt afterwards,
and we list every deviation. On Spider dev the primary hypothesis held: AUROC rose from
\spiderDesignedAuroc{} to \spiderAgreeFirstAuroc{} (paired difference \spiderAgreeFirstDelta, 95\%
CI \spiderAgreeFirstDeltaCI), and the share of correct answers withheld fell from
\spiderDesignedCorrectWithheld{} to \spiderAgreeFirstCorrectWithheld, with \pass{} accuracy
\spiderAgreeFirstPassAcc{} against a base rate of \spiderAcc. The frozen rule was not the best
agreement-based rule on Spider: agreement alone (\spiderAgreeAuroc) and the rule with an
empty-result signal (\spiderVthreeAuroc) ranked higher in point estimate, though no difference
among the three is resolved by its interval. The critique judge already inside the pipeline, scored as a
standalone signal, ranked below agreement in every evaluation. A judge asked directly how likely
the answer is to be right, added after review and not pre-registered, was mixed on its own, but
its average with agreement ranked above agreement in every evaluation under model-judged labels
(\rjSpiderMean{} on Spider); the judge is the answering model, and the gain is smaller under
execution-match labels. One answer per Spider question from a second model family (Claude Sonnet
4.5) added ranking to agreement under both label definitions.
Under one result-matching policy the designed entropy and the share agreeing with the shown answer
rank answers almost equally well, so the gain comes from how agreement enters the score. The
checks always cost unqualified coverage, and whether they add ranking depends on how correctness
is defined. A pre-registered empty-result check did not meet its effect-size criterion. Agreement
cannot resolve a business definition that is absent from the evidence, which motivates checks
built from expert-approved definitions. Correctness here means the core answer to the question
asked. Labels are model-judged. In a pilot, two annotators working independently and blind to the
scores agreed with each other and with the model labels on all ten Spider answers they checked;
a full human audit has not been done, and all correctness-based results remain subject to it.
\end{abstract}

\section{Introduction}

\subsection{The problem: one question, many places}

A safety officer at a large operating organisation is asked how much of one chemical was used
last year, and at which sites. The question is short. The answer is spread across systems built
at different times for different purposes. Issue transactions record what was handed out. An
inventory system records what is on the shelf. Safety data sheets, often PDFs, record what each
product contains. Annual storage reports, built for a regulator, record what was held above a
threshold. A site table turns codes into names. Each system keys the chemical differently, by a
registry number, a trade name or a stock number; each has its own units and its own idea of a
year. Answering by hand means finding the right tables, reconciling their keys and units, and
checking the result with whoever knows which table means what. That can take weeks, and the
answer can still be wrong.

The same shape recurs in other organisations that have accumulated systems over time: a flying
unit that records the same sorties in flight files, workbooks, a database and briefing slides, or
maintenance knowledge split between PDF manuals, database work orders and free-text notes. The
question fits in a sentence; answering it means reconciling records never designed to be read
together. That work is often handed to a data team as a request for a script
(Figure~\ref{fig:changes}).

\begin{figure}[t]
\centering
\begin{tikzpicture}[x=1mm, y=1mm, font=\scriptsize,
  file/.style={draw, fill=gray!8, minimum width=10mm, minimum height=8mm, align=center, inner sep=1pt, anchor=south west},
  pcap/.style={text width=46mm, align=left, anchor=north west, font=\scriptsize, inner sep=0pt},
  ttl/.style={font=\scriptsize\bfseries, anchor=west, inner sep=0pt},
  arr/.style={-{Stealth[length=1.8mm]}, thick}]
\node[ttl] at (0,32) {1. The records};
\foreach \x/\lab in {0/{flight\\file}, 12/{work-\\book}, 24/{data-\\base}, 36/{report,\\slides}} \node[file] at (\x,12) {\lab};
\node[pcap] at (0,6) {The same events, written down in several kinds of file by different people at different times.};
\node[ttl] at (54,32) {2. Today};
\node[draw, fill=gray!8, align=center, text width=18mm, minimum height=8mm, anchor=south west] (sc) at (54,12) {a script for the data team};
\draw[arr, gray] (sc.east) -- ++(7,0) node[right, red!70!black, align=left] {days to\\weeks};
\node[pcap] at (54,6) {The question becomes a request for a script. The answer comes back later, if the script read the records the way the asker meant.};
\node[ttl] at (108,32) {3. Asked in plain words};
\node[draw=blue!60!black, rounded corners=2pt, fill=blue!5, align=left, text width=19mm, anchor=north west, inner sep=2pt] at (108,29) {``Which sorties went best?''};
\node[draw=green!45!black, rounded corners=2pt, fill=green!6, align=left, text width=21mm, anchor=north west, inner sep=2pt] at (132,29) {``By first detection, or longest hold?''};
\node[draw, fill=white, align=left, text width=44mm, anchor=north west, inner sep=2pt] at (108,17.5) {each number opens to its file\\ thin records are named};
\node[pcap] at (108,6) {The system asks back what a word means, answers from all the files at once, and says where the records are too thin.};
\end{tikzpicture}
\caption{What changes. The records and the people stay the same; the question stops needing a
script. Panel~3 is the requirement this paper is concerned with: every number traceable, and the
system willing to say when it cannot answer.}\label{fig:changes}
\end{figure}

Conversational analytics can read more than databases. A document's tables can be read as rows,
a safety data sheet's composition section as one row per constituent, a log as timestamped
events, a scanned form through character recognition. What makes the answers combinable, and
checkable, is that every value ends up in the same form whatever file it came from: the value
itself, its uncertainty if any, the time it describes, whether it was measured, derived by a
named rule or asserted by a person or a document, where exactly it was found, and what it was
computed from (Figure~\ref{fig:record}). Once values are in that form, the question of whether an
answer is supported becomes a question about rows, tables and joins, which is the setting this
paper measures.

\begin{figure}[t]
\centering
\begin{tikzpicture}[x=1mm, y=1mm, font=\scriptsize,
  src/.style={draw, rounded corners=2pt, fill=gray!8, minimum width=40mm, minimum height=5mm, align=left, anchor=west, inner sep=2pt},
  rd/.style={font=\scriptsize\itshape, text=gray!40!black, anchor=west, inner sep=0pt},
  arr/.style={-{Stealth[length=1.8mm]}, thick}]
\foreach \i/\src/\rdr in {0/Supply and inventory (ERP)/tables, 1/Case and work records/tables, 2/Manuals and policy (PDF)/tables and text, 3/Incident reports/free text, 4/Sensor and telemetry/time series, 5/Rosters and training/tables, 6/Archives and scans/character recognition} {
  \node[src] at (0,-\i*6.5) {\src};
  \node[rd] at (45,-\i*6.5) {\rdr};
  \draw[thick, gray] (76,-\i*6.5) -- (80,-\i*6.5);
}
\draw[thick, gray] (80,0) -- (80,-39);
\node[draw=blue!60!black, rounded corners=3pt, fill=blue!5, text width=36mm, align=left, inner sep=3pt, anchor=west] (rec) at (85,-19.5)
 {\textbf{One typed record per value}\\[1pt]
  value, uncertainty, validity time\\
  evidence class: measured, derived or asserted\\
  locator: the file, row or page\\
  lineage: what it was computed from};
\draw[arr] (80,-19.5) -- (rec.west);
\node[draw=green!45!black, rounded corners=3pt, fill=green!6, text width=24mm, align=left, inner sep=3pt, anchor=west] (ans) at (133,-19.5)
 {\textbf{One answer}\\[1pt] with its queries, citations and a confidence score, or a refusal with the reason};
\draw[arr] (rec.east) -- (ans.west);
\end{tikzpicture}
\caption{Many kinds of record, one form. Each source is read by the reader suited to it, and every
value is held with its time, its evidence class, where it was found and what it was computed from.
Disagreements between sources are settled by a precedence the organisation declares, with the losing
value kept visible.}\label{fig:record}
\end{figure}

A smaller example shows how a single table can mislead. Asked how many sites in one state hold a chemical, a reasonable query
counts the rows whose \texttt{location} column names the state. It returns 2. The true answer is
213: the organisation stopped filling the location column years ago, and the state now lives in
the prefix of the site identifier. Nothing errored, the column exists, and the count is
arithmetically correct. The schema did not mean what its names implied, and the knowledge needed
to see that sat with the people who maintain the data. (The example is illustrative, in the
pattern of cases met in practice.)

\subsection{Conversational analytics helps, if it can be trusted}

Language models change the cost of that work. Given the schema, a model can find candidate
tables, write and run queries, read a document's composition section as rows, and explain its
result in plain language, often within one session. Systems that do this, often called conversational
analytics, are useful precisely where records are scattered and the people who need answers are
not the people who write SQL.

Fluency is not the hard part. The failure that matters is the confident wrong answer: a valid
query over real tables that answers a different question from the one asked, written up in the
same assured prose as a right one. The model rarely invents the number; it chooses the wrong
relationship, the wrong column or the wrong definition, and the arithmetic on top is correct. A
reader checking by eye may not see it. For such a system to be used for decisions, two things
matter more than the answer's polish. Each answer has to show its work, the queries it ran, the
records it read and the definitions it used, so that a reviewer can check it. And the system has
to decide, answer by answer, whether the evidence supports showing the result at all, and say
why when it does not.

Figure~\ref{fig:ov1} shows the arrangement this paper assumes. Records from several sources and
the organisation's recorded definitions feed a model that writes queries; the model is sampled
several times; a set of checks outside the model grades the answer and its trace; and a gate
shows the answer, shows it with a warning, or withholds it, naming the reason. This paper is about
the last two boxes.

\begin{figure}[t]
\centering
\begin{tikzpicture}[font=\footnotesize, node distance=4mm,
  src/.style={draw, rounded corners=2pt, fill=gray!8, minimum width=27mm, minimum height=6.5mm, align=center},
  box/.style={draw, rounded corners=2pt, fill=blue!5, minimum width=30mm, minimum height=10mm, align=center},
  chk/.style={draw, rounded corners=2pt, fill=orange!8, minimum width=30mm, minimum height=10mm, align=center},
  gt/.style={draw, rounded corners=2pt, minimum width=24mm, minimum height=6mm, align=center},
  arr/.style={-{Stealth[length=2mm]}, thick}]
\node[src] (s1) {Issue transactions};
\node[src, below=1.5mm of s1] (s2) {Inventory};
\node[src, below=1.5mm of s2] (s3) {Documents (PDF)};
\node[src, below=1.5mm of s3] (s4) {Regulatory reports};
\node[src, below=1.5mm of s4] (s5) {Reference tables};
\node[font=\scriptsize\bfseries, above=1mm of s1] {Records in many places};
\node[box, right=10mm of s2, yshift=-1mm] (ctx) {Recorded context\\[-1pt]{\scriptsize definitions, joins, rules}};
\node[box, below=5mm of ctx] (llm) {Model writes queries\\[-1pt]{\scriptsize sampled $K$ times}};
\node[font=\scriptsize, above=2mm of ctx, align=center] (q) {\emph{``How much of this chemical}\\\emph{did each site use last year?''}};
\node[chk, right=10mm of llm, yshift=6mm] (chk) {Checks outside the model\\[-1pt]{\scriptsize tables, joins, numbers,}\\[-2pt]{\scriptsize claims, agreement}};
\node[gt, right=10mm of chk, yshift=8mm, fill=green!10] (p) {Show};
\node[gt, below=1.5mm of p, fill=yellow!15] (f) {Show with warning};
\node[gt, below=1.5mm of f, fill=red!10] (a) {Withhold, say why};
\foreach \n in {s1,s2,s3,s4,s5} \draw[thick] (\n.east) -- ++(4mm,0);
\draw[thick] ([xshift=4mm]s1.east) -- ([xshift=4mm]s5.east);
\draw[arr] ([xshift=4mm]s1.east |- llm.west) -- (llm.west);
\draw[arr] (q) -- (ctx);
\draw[arr] (ctx) -- (llm);
\draw[arr] (llm.east) -- (chk.west);
\draw[arr] (chk.east) -- (p.west);
\draw[arr] (chk.east) -- (f.west);
\draw[arr] (chk.east) -- (a.west);
\node[draw, dashed, rounded corners=3pt, inner sep=2.5mm, fit=(chk)(p)(f)(a), label={[font=\scriptsize\bfseries]above:Measured in this paper}] {};
\end{tikzpicture}
\caption{The setting. Records from several systems and the owners' recorded definitions feed a
model that writes queries. Checks outside the model grade the answer and its trace; a gate shows
the answer, shows it with a warning, or withholds it with a reason. This paper measures the checks
and the gate.}\label{fig:ov1}
\end{figure}

\subsection{What can go wrong, and what was designed to catch it}

The failures follow from the problem. A query can name a table that does not exist, or join two
columns whose keys barely overlap. A column can hold a stand-in code such as $-1$ that a sum
treats as a value. The written answer can quote a number that no returned row contains, or add
claims the data does not support. The model can read the question one way on one run and another
way on the next. And it can read it the same wrong way every time, as in both examples above.

The score studied here was designed against that list. After an answer is written, a checker goes
back over the trace and grades whether the queries used real tables and sound joins, whether the
numbers in the text match returned rows, whether the model claimed things the data does not
support, whether every part of the question was answered, and whether independently sampled
answers agree. A few failures are treated as too serious to average away and force the answer
below the abstain line. The design was set by judgement, with the stated intent that its weights
and thresholds would be calibrated on held-out questions. It had not been measured.

\subsection{This study}

This paper audits that design and measures its discrimination, calibration and gate behaviour on
public text-to-SQL benchmarks, where every answer can be checked against a reference. Its scope
is narrower than the setting above: it evaluates a reference implementation of the scoring and
gating layer on structured databases, and does not evaluate document extraction, source
reconciliation, or context-aware answer generation. It asks three questions.

\begin{description}
\item[E1.] Does the score predict correctness? Are \pass{} answers mostly right and withheld
answers mostly wrong?
\item[E2.] Does each check catch the failure it is built for, when that failure is planted in an
answer that was right?
\item[E3.] Does each piece earn its place, measured by removing or changing one piece at a time?
\end{description}

We answered these on BIRD dev and the answers were not flattering to the design. The useful
signal came from one place, agreement among sampled answers, and the score combined it in a way
that discarded much of the information. We then wrote down a revised rule, froze and hashed the code,
wrote the test and its pass criteria down, and ran it on a second benchmark, Spider dev, that
played no part in choosing the rule. We close by returning to the problem above: a checker cannot
recover an intended business definition that is absent from the evidence it can see, and those
definitions live with the people who own the data.

\paragraph{Contributions.}
\begin{itemize}
\item An empirical audit of a designed, governed answer score on public text-to-SQL data with the
production prompt and a current model, reported as measured, including the parts that did not
work.
\item A decomposition of where the score's signal comes from and what each part costs, including
an ablation that separates the agreement statistic from the result-matching policy, and a
measurement of what the deterministic and judge checks cost in coverage on top of agreement.
\item A confirmation on Spider in which the rule code, thresholds and analysis were fixed and
hashed before any Spider answer existed, with every hypothesis reported pass or fail and every
deviation from the registration listed, and a replication on BIRD under the same harness.
\item A labelling protocol for answers written in an application's own style, and a measurement
of how much the labeller moves the results.
\end{itemize}
The agreement statistic itself is not new: execution-based consistency across sampled queries
is an established confidence signal for text-to-SQL (Section~\ref{sec:related}). What this paper
adds is the audit of a designed score built around it, tested on data the rule was not
chosen on.

\section{Related work}\label{sec:related}

\paragraph{Consistency as confidence.} Sampling several outputs and measuring their agreement is
a standard black-box confidence signal: self-consistency for reasoning \citep{wang2023selfconsistency},
SelfCheckGPT for hallucination \citep{manakul2023selfcheckgpt}, and semantic entropy, which
clusters samples by meaning before measuring their spread \citep{kuhn2023semantic,farquhar2024semantic}.
The designed $L_2$ term is semantic entropy with meaning fixed by execution.

\paragraph{Confidence for text-to-SQL.} \citet{maleki2026confidence} compare black-box and
white-box confidence estimates for text-to-SQL and find consistency-based methods strongest among
black-box approaches, with query execution a useful additional signal; the share of sampled
queries that return the same result is the agreement statistic used here. \citet{richardson2026predicts}
compare many signals on BIRD and Spider against execution-match labels and report AUROC of about
0.61 to 0.68 for black-box signals including execution self-consistency, 0.72 to 0.78 for an
LLM judge, and about 0.82 for an ensemble of judges from two providers. Their labels, models and
prompts differ from ours, so the numbers are not directly comparable. Their result that
verification signals can outrank consistency is relevant here. We score the critique judge that
already runs inside the pipeline as a standalone signal and, added after review, a judge asked
directly for the probability that an answer is right, which is the kind they report
(Section~\ref{sec:ablate}).
\citet{somov2025confidence} attach entropy-based selective classifiers and calibration to
text-to-SQL models and find an encoder-decoder model better calibrated than large general models.
\citet{chen2025rts} build abstention into the text-to-SQL pipeline itself, with conformal
guarantees at schema linking and a human in the loop; our gate acts on the finished answer.

\paragraph{Selective prediction and calibration.} Answering or abstaining by a score is selective
classification \citep{elyaniv2010foundations,geifman2017selective}; under domain shift it was
studied for question answering by \citet{kamath2020selective}. We report the gate's coverage and
accuracy beside AUROC, and calibration error as defined by \citet{guo2017calibration}.

\paragraph{Pre-registration.} Fixing hypotheses and analysis before seeing confirmatory data
\citep{nosek2018preregistration} is uncommon in this literature; we adopt it because the rules
under test were chosen on the exploratory data.

\section{The score}\label{sec:score}

This section defines the designed score and the three frozen rules precisely enough to be
re-implemented. The reference code is \texttt{score\_v2.py} and \texttt{agreement.py} in the
released bundle.

\subsection{Setting and notation}

A question $q$ is asked over a database $D$. The answering system is sampled $K=5$ times
independently. Sample $k$ produces a tool trace, a written answer, and a final query result
$R_k$, a multiset of rows, or $R_k=\bot$ if it ran no successful query. Sample $0$ is the answer
shown to the user and scored; samples $1,\dots,4$ are used only for agreement. A label
$y\in\{0,1\}$ says whether the scored answer is right (Section~\ref{sec:labels}).

\subsection{The designed score (bench-1.0)}

The designed score is a weighted mean of five terms over those that can be measured, followed
by caps and a three-way gate. The weights are 0.30 for GROUNDED, 0.20 each for SCHEMA, SAFETY
and RETRIEVE, and 0.10 for COMPLETE. With $M$ the set of measured terms $T_j$,
\begin{equation}
\IES_0=\frac{\sum_{j\in M}w_jT_j}{\sum_{j\in M}w_j},\qquad
\IES=\begin{cases}\min(\IES_0,\,0.59)&\text{if any hard cap fires,}\\
\min(\IES_0,\,0.79)&\text{if evidence is thin,}\\ \IES_0&\text{otherwise,}\end{cases}
\end{equation}
and the gate is \pass{} if $\IES\ge0.80$, \flag{} if $0.60\le\IES<0.80$, \abstain{} otherwise.
Evidence is thin when fewer than three terms are measured or GROUNDED is not.

\paragraph{GROUNDED} is the mean of the defined parts among three shares checked against a
profile of $D$: $g_{\mathrm{tab}}$, the share of parsed queries whose tables all exist (CTE names
and the SQLite catalogue are not tables); $g_{\mathrm{join}}$, the share of equi-join column
pairs $(a,b)$ with key overlap
\begin{equation}
\rho(a,b)=\max\!\left(\frac{|K_a\cap K_b|}{|K_a|},\,\frac{|K_a\cap K_b|}{|K_b|}\right)\ge0.85,
\end{equation}
where $K_x$ is the set of distinct non-null values of column $x$; and $g_{\mathrm{val}}$: among
returned columns whose source column carries a stand-in code (a value such as $-1$ or $9999$
covering at least 0.5\% of an otherwise non-negative or much smaller column), the share whose
returned values contain no stand-in. Like the other two shares, larger is better. A missing
table, a join with $\rho<0.85$, or a stand-in returned as a value fires a hard cap.

\paragraph{SCHEMA} checks the numbers in the written answer. A number $x$ written with $d$
decimals in the Executive Summary or Detailed Analysis sections is \emph{cited} if some
returned cell, cell$\times100$, or row count $c$ satisfies $|x-c|\le\tfrac12 10^{-d}$; numbers
that appear in the question or hint, and the integers 1 to 5, are ignored. SCHEMA is 1 if at
least one number is cited and unmeasured otherwise; any number neither cited nor ignored fires
the hard cap \textsc{uncited\_number}. SCHEMA is therefore a constant 1 whenever it is measured, and
numerical verification acts only through the cap; the ``drop SCHEMA'' ablation in
Table~\ref{tab:e3} removes that constant from the weighted mean (and changes when evidence counts as
thin), and does not test removing the number check itself.

\paragraph{COMPLETE} is the share of the question's parts that the answer covers, as listed by
a judge model that never sees the reference answer. \textbf{RETRIEVE} is not measured on these
benchmarks, since every value is a database cell, and the weights renormalise over the rest.

\paragraph{SAFETY} is the hallucination check TDD-H, the minimum of two layers here (a third,
a competence boundary, is not run):
\begin{equation}
\mathrm{SAFETY}=\min(L_1,L_2),\qquad L_1=1-r_1,\qquad
L_2=1-\frac{H(\pi)}{\log K}.
\end{equation}
$L_1$ is a checklist: $r_1=1$ if any deterministic tag fires or the judge reports an unsupported
claim, $r_1=\tfrac12$ if any other judge or heuristic tag fires, and $r_1=0$ otherwise. An
unsupported claim of any kind also fires a hard cap. $L_2$ is scaled semantic entropy
\citep{kuhn2023semantic,farquhar2024semantic} with meaning fixed by execution: samples are
clustered by the signature of their result set (full rows, floats rounded to six places), a
sample with no successful query is its own cluster, and $\pi$ is the distribution of the $K$
samples over clusters.

\subsection{What the entropy term cannot see}

\begin{proposition}\label{prop:entropy}
$L_2$ is invariant under any permutation of the $K$ samples. Hence it takes the same value
whether the scored sample $0$ lies in the largest cluster or alone.
\end{proposition}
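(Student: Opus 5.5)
The plan is to show that $L_2$ depends on the samples only through the multiset of cluster sizes, and that this multiset does not change when the samples are relabelled. I would first restate the construction of $\pi$ precisely. Each sample $k\in\{0,\dots,K-1\}$ is mapped to a key $\sigma(k)$. If $R_k\neq\bot$, the key is the signature of its result set (full rows, floats rounded to six places). If $R_k=\bot$, the key is a fresh singleton tag unique to that sample. Clusters are the fibres of $\sigma$, and $\pi_c=|\sigma^{-1}(c)|/K$. The key observation is that $\sigma(k)$ is a function of sample $k$'s own output alone, independent of its index. The one exception is the singleton tag for $\bot$, which may carry the index but only serves to keep that sample in a cluster of size one. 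This is the step where some care is needed, and it is what I expect to be the only real obstacle.

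Next, let $\tau$ be a permutation of $\{0,\dots,K-1\}$ and consider the relabelled samples $k\mapsto$ sample $\tau(k)$. For any key $c$ arising from a non-$\bot$ result, the fibre under the new labelling is $\tau^{-1}(\sigma^{-1}(c))$, which has the same cardinality. For a $\bot$ sample, the relabelled sample is again its own singleton cluster. So the multiset $\{\pi_c\}$ is unchanged, and the new distribution is $\pi$ up to a bijection of cluster names. The entropy $H(\pi)=-\sum_c\pi_c\log\pi_c$ is a symmetric function of the cluster masses. Hence $H$, and therefore $L_2=1-H(\pi)/\log K$ with $K$ fixed, is invariant.

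For the consequence, take any configuration in which sample $0$ lies in the largest cluster. Choose $\tau$ to be the transposition swapping $0$ with any index $j$ that belongs to a different cluster, ideally a singleton (for instance a $\bot$ sample or a unique result). After the swap, the scored sample is alone or outside the largest cluster, yet by the invariance just shown $L_2$ is the same. A concrete witness with $K=5$ makes this vivid: cluster sizes $(4,1)$ give the same $L_2$ whether sample $0$ is among the four or is the one. I would note explicitly that the statement concerns the same multiset of outputs with the scored position changed, since the shown answer's membership is exactly what the relabelling moves. The contrast with an agreement-with-sample-$0$ statistic, which is not invariant, follows immediately.
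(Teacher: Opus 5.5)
Your proposal is correct and follows the paper's argument: the paper's proof is the single observation that $\pi$ is a function of the multiset of cluster sizes, which does not depend on sample order. Your fibre count under a permutation $\tau$ (with each $\bot$ sample kept as its own singleton) and the transposition argument for the consequence spell that out, with the proviso you already note that ``alone'' needs a singleton cluster for sample $0$ to be swapped into.
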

\begin{proof}
$\pi$ is a function of the multiset of cluster sizes, which does not depend on sample order.
\end{proof}

For $K=5$, if four samples agree and one differs, $\pi=(0.8,0.2)$ and $L_2=1-H(\pi)/\log5\approx
0.69$ in both cases: when the scored answer is with the four, and when it is the one. The same
holds for any split. So $L_2$ measures how settled the model is in general, not whether the
answer being shown is the one it settles on. The quantity the gate needs is the second. How often
the difference matters in practice is measured in Section~\ref{sec:ablate}; it turns out to be
small.

\begin{definition}[Agreement with the scored answer]
\begin{equation}
\Agr=\frac1K\sum_{k=0}^{K-1}\one\!\left[R_k\simeq R_0\right],\qquad \Agr=0\ \text{if}\ R_0=\bot,
\end{equation}
where $R\simeq R'$ holds when both have the same row count and the narrower result equals, as a
multiset of rows, a projection of the wider one onto an injective choice of its columns (extra
columns allowed), after normalising values (floats to four places, numeric strings to numbers).
\end{definition}

$\Agr$ takes values in $\{0,0.2,\dots,1\}$ and counts the scored sample itself, so
$\Agr\ge0.2$ whenever $R_0\neq\bot$. Results are stored up to 100 rows; when either result is
truncated and the row counts are equal, $\one[\cdot]$ is undetermined. The pre-registered
policy counts such pairs as agreeing and reports the bound with them counted as disagreeing.

\subsection{The frozen rules}

Three rules were written from the BIRD findings and frozen at commit \texttt{cb66ea7} (code bundle
\texttt{6e3e723e8f76711d}) on 29 September 2026. None has a fitted parameter; all keep the
designed thresholds.

\paragraph{v2 rule (2.0).} The designed score with three changes: $L_2$ is replaced by $\Agr$;
$L_1$ and the unsupported-claim cap count only judge issues that the judge classes as concerning
the answer, not the commentary around it; and an answer with COMPLETE $<0.5$ is capped at 0.79.

\paragraph{v2 agreement-first (2.1), the primary rule.} The score is $\Agr$ itself, with
deterministic checks as vetoes and a mix of deterministic and judge-based checks as caps:
\begin{equation}
s=\begin{cases}
\min(\Agr,0.59)&\text{if a veto fires,}\\
\min(\Agr,0.79)&\text{if a cap fires,}\\
\Agr&\text{otherwise,}\end{cases}
\end{equation}
where the vetoes are a missing table, a bad join and a stand-in returned as a value, and the caps
are an uncited number and thin evidence (deterministic) and a judge issue about the answer and
COMPLETE $<0.5$ (both from the GPT-5 mini judge).
With $K=5$ the gate reads directly: \pass{} needs at least four of five samples to agree and no
check to fire; three agreeing samples give \flag; two or fewer give \abstain. Every \flag{} and
\abstain{} carries the names of the checks that caused it.

\paragraph{v3 (3.0).} Agreement-first plus zero-cost signals aimed at confident misreadings,
where all samples agree and are wrong. One firing signal caps at 0.79, two or more at 0.59. Only
the empty-result signal (the final query returned no rows) was frozen for test.

\section{Method}\label{sec:method}

\subsection{Data}

BIRD dev \citep{li2023bird}: 1{,}534 questions over 11 SQLite databases, each with reference
SQL and an evidence hint that we pass to the model as a hint. Spider dev \citep{yu2018spider}:
1{,}034 questions over 20 databases, with reference SQL and no hint.

\subsection{System under test}

The answering model is GPT-5.1 with the system's production system prompt, trimmed only of
sections for tools the benchmark does not provide (vector search, charts, documents and
citations, workflows, forecasting, a domain table guide); whole sections and lines were removed
and nothing reworded (Appendix~\ref{app:prompt}). The model has three tools over the question's
database: list tables, table details with sample rows, and read-only SQL that shows it up to 100
rows. One added system message states that the last query must return exactly the requested
columns and rows, since both benchmarks grade result sets. Five answers are sampled per question
at default decoding on OpenAI's flex tier. The IES judge is GPT-5 mini and never sees the
reference.

\subsection{Correctness labels}\label{sec:labels}

The benchmarks' own rule, identical result sets, marks \birdExpExecMatch{} of BIRD answers
correct. It fails in two directions for an application that answers in its own style. It marks
right answers wrong: names instead of ids, 0.83 instead of 83\%, both values when the question
asks which is larger. And it trusts reference SQL that is sometimes wrong, such as a formula
that contradicts the hint.

We therefore label each answer with GPT-5.4 grading against the reference. The labeller sees
the question, hint, reference SQL and result, the answer's SQL and result, mechanical
comparisons of the two, and the written answer, and rules on the answer (correct, incorrect,
unclear), on the difference (none, form only, content), and on the reference (ok, wrong,
question ambiguous). Answers ruled unclear or question-ambiguous are left out. On the 29
September BIRD run the labeller judged \birdExpRefWrong{} reference answers wrong
(\birdExpRefWrongPct) and kept \birdExpN{} answers, \birdExpWrong{} of them wrong.

The labeller is a model and its rulings move the results. Section~\ref{sec:labeller} measures
how much, by relabelling the same answers with the rebuilt labeller, and Section~\ref{sec:labeller}
reports a ten-answer human pilot. Correctness here means that the answer
to the question asked is right: the returned rows and the figures the written answer reports
from them. Unsupported commentary around a correct answer is a real defect, but it is recorded
separately and does not make the answer incorrect. The score is judged against the same target.

\subsection{Metrics}

For a score $s$ and labels $y$ over $n$ answers we report AUROC with a 95\% interval from 2{,}000
bootstrap resamples of answers \citep{efron1993bootstrap}, the paired bootstrap interval of
$\mathrm{AUROC}(s')-\mathrm{AUROC}(s)$ on the same resamples, expected calibration error over
10 equal-width bins \citep{guo2017calibration}, the Brier score, and three gate quantities:
\begin{align*}
\text{\pass{} accuracy}&=P(y=1\mid\pass),\\
\text{correct withheld}&=P(\abstain\mid y=1),\\
\text{wrong withheld}&=P(\abstain\mid y=0).
\end{align*}
Answers to questions on the same database share a schema and its conventions, so their errors are
not independent; resampling answers can understate the width of an interval. For the headline
differences we therefore also report a database-level cluster bootstrap, resampling whole
databases (11 in BIRD, 20 in Spider) with every answer on them (Table~\ref{tab:cluster}). With so
few clusters these intervals are themselves approximate. Ties in a discrete score count one half in AUROC. The fitted variant reported as exploratory uses
logistic regression with whole-database cross-fitting over 20 random splits, so no database is
on both sides of a fit.

\subsection{Order of work}

\begin{enumerate}
\item \textbf{Exploratory, BIRD, 29 September.} E1 to E3 on one full run. The frozen rules come
from this step and are evaluated on the same answers, so their BIRD numbers are in-sample.
\item \textbf{Freeze and pre-registration, 29 September.} Code bundle hashed; hypotheses H1 to
H4 and their pass criteria written before any Spider answer existed (Section~\ref{sec:prereg}).
\item \textbf{Harness rebuild, 11 October.} The benchmark harness used on 29 September was not
recoverable. We rebuilt it and validated the rebuilt designed scorer field for field against all
1{,}534 stored 29 September records before using it.
\item \textbf{Confirmatory, Spider, 11--12 October,} on the rebuilt harness, evaluated by the
frozen bundle.
\item \textbf{Replication, BIRD, 11 October,} a full rerun on the rebuilt harness with the same
settings as Spider. This is not pre-registered and is reported as a check that the harness
rebuild did not change the BIRD picture.
\end{enumerate}

\section{Exploratory results on BIRD}\label{sec:explore}

\subsection{E1: the designed score}

GPT-5.1 with the production prompt answered \birdExpAcc{} of the \birdExpN{} labelled BIRD
questions correctly. Table~\ref{tab:bird} gives the designed score and the rules that came out
of this analysis, on the same answers.

\begin{table}[t]
\centering\footnotesize\setlength{\tabcolsep}{4pt}
\caption{BIRD dev, 29 September run, \birdExpN{} labelled answers. The rules below the line were
derived from these answers and are in-sample here.}\label{tab:bird}
\begin{tabular}{lccccc}
\toprule
Score & AUROC (95\% CI) & \makecell{\pass\\accuracy} & \makecell{\pass\\share} & \makecell{Correct\\withheld} & \makecell{Wrong\\withheld}\\
\midrule
Designed & \birdExpDesignedAuroc{} (\birdExpDesignedAurocCI) & \birdExpDesignedPassAcc & \birdExpDesignedPassShare & \birdExpDesignedCorrectWithheld & \birdExpDesignedWrongWithheld\\
\midrule
v2 rule & \birdExpRuleAuroc{} (\birdExpRuleAurocCI) & \birdExpRulePassAcc & \birdExpRulePassShare & \birdExpRuleCorrectWithheld & \birdExpRuleWrongWithheld\\
Agreement alone & \birdExpAgreeAuroc{} (\birdExpAgreeAurocCI) & \birdExpAgreePassAcc & \birdExpAgreePassShare & \birdExpAgreeCorrectWithheld & \birdExpAgreeWrongWithheld\\
Agreement-first & \birdExpAgreeFirstAuroc{} (\birdExpAgreeFirstAurocCI) & \birdExpAgreeFirstPassAcc & \birdExpAgreeFirstPassShare & \birdExpAgreeFirstCorrectWithheld & \birdExpAgreeFirstWrongWithheld\\
v3 (empty result) & \birdExpVthreeAuroc{} (\birdExpVthreeAurocCI) & \birdExpVthreePassAcc & \birdExpVthreePassShare & \birdExpVthreeCorrectWithheld & \birdExpVthreeWrongWithheld\\
Fitted (exploratory) & \birdExpFittedAuroc{} (\birdExpFittedAurocCI) & \birdExpFittedPassAcc & \birdExpFittedPassShare & \birdExpFittedCorrectWithheld & \birdExpFittedWrongWithheld\\
\bottomrule
\end{tabular}
\end{table}

The \pass{} gate holds up at the top: \pass{} answers are right \birdExpDesignedPassAcc{} of
the time against \birdExpAcc{} overall. Below the top the score carries little information.
It withholds \birdExpDesignedCorrectWithheld{} of correct answers and
\birdExpDesignedWrongWithheld{} of wrong ones. The hard cap puts \exAtCap{} answers at
exactly 0.59, and \exAtCapRight{} of them are correct, close to the average. The score is not a
probability: answers scoring 0.9--1.0 are right \exBandNine{} of the time, 0.8--0.9
\exBandEight, and 0.7--0.8 \exBandSeven.

\subsection{E3: what each piece contributes}

Recomputing the score from saved components, one change at a time, with no new model calls:

\begin{table}[t]
\centering\footnotesize\setlength{\tabcolsep}{4pt}
\caption{Single changes to the designed score, recomputed from saved components (BIRD, 29 September run).}\label{tab:e3}
\begin{tabular}{lccc}
\toprule
Change & AUROC & \pass{} acc. & \pass{} share\\
\midrule
None (designed) & \birdExpDesignedAuroc & \birdExpDesignedPassAcc & \birdExpDesignedPassShare\\
Drop GROUNDED & \exEDropGroundedAuroc & \exEDropGroundedPassAcc & \exEDropGroundedPassShare\\
Drop SCHEMA & \exEDropSchemaAuroc & \exEDropSchemaPassAcc & \exEDropSchemaPassShare\\
Drop COMPLETE & \exEDropCompleteAuroc & \exEDropCompletePassAcc & \exEDropCompletePassShare\\
Drop SAFETY & \exEDropSafetyAuroc & \exEDropSafetyPassAcc & \exEDropSafetyPassShare\\
SAFETY without $L_2$ & \exESafetyNoAuroc & \exESafetyNoPassAcc & \exESafetyNoPassShare\\
SAFETY as mean instead of min & \exESafetyMeanAuroc & \exESafetyMeanPassAcc & \exESafetyMeanPassShare\\
No thin-evidence cap & \exENoThinCapAuroc & \exENoThinCapPassAcc & \exENoThinCapPassShare\\
No hard caps & \exENoHardCapsAuroc & \exENoHardCapsPassAcc & \exENoHardCapsPassShare\\
Unsupported claims capped only when about the answer & \exEUnsupportedAnswerOnlyAuroc & \exEUnsupportedAnswerOnlyPassAcc & \exEUnsupportedAnswerOnlyPassShare\\
Unsupported claims ignored & \exEUnsupportedIgnoredAuroc & \exEUnsupportedIgnoredPassAcc & \exEUnsupportedIgnoredPassShare\\
Agreement alone, relaxed rows & \birdExpAgreeAuroc & \birdExpAgreePassAcc & \birdExpAgreePassShare\\
\bottomrule
\end{tabular}
\end{table}

Three findings shaped the frozen rules.

\emph{Agreement carries the signal.} Removing $L_2$ from SAFETY drops AUROC to
\exESafetyNoAuroc; agreement alone reaches \birdExpAgreeAuroc. Dropping GROUNDED, SCHEMA (a constant when measured)
or COMPLETE, removing the thin-evidence cap, or taking the mean instead of the minimum each move
AUROC by \exMaxMinorDelta{} or less. On BIRD the databases are clean and the model rarely invents tables, so
the grounding terms are almost always 1 and carry no ranking information here.

\emph{Unsupported-claim handling is the largest observed source of degradation.} It acts twice:
through the checklist $L_1$, which drops to zero on any unsupported claim and so pulls SAFETY to
zero through the minimum, and through the hard cap. Section~\ref{sec:ablate} separates the two;
most of the loss comes through $L_1$ and the minimum. The cap fires on \exFireCapUnsupportedAnyShare{} of
answers, and those answers are right \exFireCapUnsupportedAnyRight{} of the time, close to the
average. Most of the answers it caps (\exFireUnsupportedCommentaryN{} of
\exFireCapUnsupportedAnyN) carry a claim the judge itself classed as commentary, and answers with
such a claim are right \exFireUnsupportedCommentaryRight{} of the time.
The claims are real (``Super Strength appears frequently in the dataset'', from a single count),
and they come from the production prompt's instruction to enrich every answer with comparisons
and interpretation. They are a fault in what the user reads, but they say little about whether
the answer is right.

\emph{The checks that work are rare.} When the judge reports an intent mismatch the answer is
right only \exFireJudgeIntentMismatchRight{} of the time; after a bad join,
\exFireCapBadJoinRight; after a wrong formula, \exFireJudgeWrongFormulaRight. Each of these fires
on \exFireRareMax{} of answers or fewer.

\subsection{E2: planted errors}

From answers labelled correct we planted each of eight checklist errors, 60 per kind
(Appendix~\ref{app:inject}). For errors that change the query, the changed SQL was executed and
the model rewrote its answer from the new rows, so the text agrees with the broken result. All
\exControlsOk{} of \exPlanted{} controls, rescored unchanged, reproduced their stored scores.

\begin{table}[t]
\centering\footnotesize\setlength{\tabcolsep}{4pt}
\caption{Planted errors, 60 per kind. Withheld, not \pass{} and own check are shares of the 60
altered answers. Control withheld is the share of the 60 unchanged answers the designed score
already withheld; newly withheld counts, among the controls it did not withhold, those the
planted error moved to \abstain.}\label{tab:e2}
\begin{tabular}{lccccc}
\toprule
Error & Withheld & Not \pass & Own check & \makecell{Control\\withheld} & \makecell{Newly\\withheld}\\
\midrule
Fake table & \exInjFakeTableWithheld & \exInjFakeTableNotPass & \exInjFakeTableOwnCheck & \rtInjFakeTableCtrlWithheld & \rtInjFakeTableNewWithheldN\\
Bad join & \exInjBadJoinWithheld & \exInjBadJoinNotPass & \exInjBadJoinOwnCheck & \rtInjBadJoinCtrlWithheld & \rtInjBadJoinNewWithheldN\\
Stand-in as value & \exInjStandInWithheld & \exInjStandInNotPass & \exInjStandInOwnCheck & \rtInjStandInCtrlWithheld & \rtInjStandInNewWithheldN\\
Wrong number & \exInjWrongNumberWithheld & \exInjWrongNumberNotPass & \exInjWrongNumberOwnCheck & \rtInjWrongNumberCtrlWithheld & \rtInjWrongNumberNewWithheldN\\
Wrong time window & \exInjTimeWindowWithheld & \exInjTimeWindowNotPass & \exInjTimeWindowOwnCheck & \rtInjTimeWindowCtrlWithheld & \rtInjTimeWindowNewWithheldN\\
Wrong aggregate & \exInjWrongAggregateWithheld & \exInjWrongAggregateNotPass & \exInjWrongAggregateOwnCheck & \rtInjWrongAggregateCtrlWithheld & \rtInjWrongAggregateNewWithheldN\\
Intent flip & \exInjIntentFlipWithheld & \exInjIntentFlipNotPass & \exInjIntentFlipOwnCheck & \rtInjIntentFlipCtrlWithheld & \rtInjIntentFlipNewWithheldN\\
Partial answer & \exInjPartialAnswerWithheld & \exInjPartialAnswerNotPass & \exInjPartialAnswerOwnCheck & \rtInjPartialAnswerCtrlWithheld & \rtInjPartialAnswerNewWithheldN\\
\bottomrule
\end{tabular}
\end{table}

The four deterministic checks withheld every planted error of their kind, each by its own check.
The judge-based checks withheld \exInjJudgeRange. About a third to two-fifths of the unchanged
controls were already withheld, so the withheld column overstates what a planted error adds;
among controls the designed score did not withhold, every deterministic error was newly withheld,
and the judge-based errors were newly withheld \rtInjWrongAggregateNewWithheld{} (wrong
aggregate) to \rtInjTimeWindowNewWithheld{} (time window) of the time. A judge check that fires
does not always stop a \pass: a non-commentary judge issue sets $r_1=\tfrac12$ without a hard cap,
and with the other terms at 1 the weighted mean can still reach the \pass{} line. Of the
\rtInjWrongAggregateFired{} planted wrong aggregates where the judge fired, \rtInjWrongAggregateFiredButPass{}
still passed. Real errors are harder: the designed score
withholds \birdExpDesignedWrongWithheld{} of the wrong answers the model actually produced.
Fault injection overstates real-world detection and should be reported beside it.

\section{Pre-registered confirmation on Spider}\label{sec:prereg}

\subsection{What was fixed in advance}

On 29 September, before any Spider answer was generated, we fixed the scorer code (bundle
\texttt{6e3e723e8f76711d}; \texttt{evaluate.py} prints the hash and a mismatch voids the run),
the rules under test (agreement-first as primary, v3 with the empty-result signal), the
thresholds, the truncation policy, the exclusion rule for labels, and four hypotheses
\citep{nosek2018preregistration}:

\begin{description}
\item[H1 (primary).] Agreement-first ranks answers better than the designed score: the paired
bootstrap 95\% interval of $\mathrm{AUROC}(\text{agree-first})-\mathrm{AUROC}(\text{designed})$
lies above 0.
\item[H2.] Agreement-first withholds fewer correct answers than the designed score, and its
\pass{} accuracy is not lower by more than 2 points.
\item[H3.] An empty final result marks a likely wrong answer: accuracy when it fires is at least
30 points below overall accuracy, tested only if it fires on 10 or more answers.
\item[H4 (descriptive).] AUROC ordering designed $<$ agreement alone $\le$ agreement-first
$\le$ v3.
\end{description}

What the bundle hash certifies is the rule code, the thresholds, the truncation policy and the
exclusion rule. It does not certify the answering harness, the prompts or the labeller, which were
rebuilt after the registration (the deviations are listed below). The registration was committed
to what was then a private repository and was not publicly timestamped, so its date rests on that
repository's history. H1 is also a weaker test than its status as primary suggests: the BIRD
decomposition (Table~\ref{tab:stages}) had already located the loss, so H1 mainly asks whether the
same loss recurs on new data. We report the test as a confirmation of the frozen rule's behaviour,
not as a strong pre-registered trial.

\subsection{Results}

The evaluation printed the frozen bundle hash and reproduced the stored designed score and gate
state on all \spiderN{} kept answers. Of \spiderRead{} answers, \spiderDropped{} were left out
as unclear or question-ambiguous; \spiderWrong{} of the kept answers are labelled wrong and
accuracy is \spiderAcc.

\begin{table}[t]
\centering\footnotesize\setlength{\tabcolsep}{4pt}
\caption{Spider dev, pre-registered confirmation, \spiderN{} labelled answers.}\label{tab:spider}
\begin{tabular}{lccccc}
\toprule
Score & AUROC (95\% CI) & \makecell{\pass\\accuracy} & \makecell{\pass\\share} & \makecell{Correct\\withheld} & \makecell{Wrong\\withheld}\\
\midrule
Designed & \spiderDesignedAuroc{} (\spiderDesignedAurocCI) & \spiderDesignedPassAcc & \spiderDesignedPassShare & \spiderDesignedCorrectWithheld & \spiderDesignedWrongWithheld\\
Agreement-first (primary) & \spiderAgreeFirstAuroc{} (\spiderAgreeFirstAurocCI) & \spiderAgreeFirstPassAcc & \spiderAgreeFirstPassShare & \spiderAgreeFirstCorrectWithheld & \spiderAgreeFirstWrongWithheld\\
v3 (empty result) & \spiderVthreeAuroc{} (\spiderVthreeAurocCI) & \spiderVthreePassAcc & \spiderVthreePassShare & \spiderVthreeCorrectWithheld & \spiderVthreeWrongWithheld\\
Agreement alone & \spiderAgreeAuroc{} (\spiderAgreeAurocCI) & \spiderAgreePassAcc & \spiderAgreePassShare & \spiderAgreeCorrectWithheld & \spiderAgreeWrongWithheld\\
\midrule
\multicolumn{6}{l}{\emph{Not pre-registered}}\\
v2 rule & \spiderRuleAuroc{} (\spiderRuleAurocCI) & \spiderRulePassAcc & \spiderRulePassShare & \spiderRuleCorrectWithheld & \spiderRuleWrongWithheld\\
Fitted & \spiderFittedAuroc{} (\spiderFittedAurocCI) & \spiderFittedPassAcc & \spiderFittedPassShare & \spiderFittedCorrectWithheld & \spiderFittedWrongWithheld\\
\bottomrule
\end{tabular}
\end{table}

\begin{table}[t]
\centering\footnotesize\setlength{\tabcolsep}{4pt}
\caption{Pre-registered hypotheses on Spider dev.}\label{tab:hyp}
\begin{tabularx}{\linewidth}{lXl}
\toprule
& Observed & Verdict\\
\midrule
H1 & AUROC difference \spiderAgreeFirstDelta{} (95\% CI \spiderAgreeFirstDeltaCI); with truncated
pairs counted as disagreeing, agreement-first AUROC \spiderAgreeFirstSensAuroc & Pass\\
H2 & correct withheld \spiderAgreeFirstCorrectWithheld{} vs \spiderDesignedCorrectWithheld;
\pass{} accuracy \spiderAgreeFirstPassAcc{} vs \spiderDesignedPassAcc{}
(\spiderPassAccGapPts{} points) & Pass\\
H3 & empty result fired on \spiderEmptyFiredN{} answers (\spiderEmptyFires); right
\spiderEmptyRight{} of the time vs \spiderAcc{} overall, \spiderEmptyGapPts{} points below
& Fail\\
H4 & designed \spiderDesignedAuroc{} $<$ agreement-first \spiderAgreeFirstAuroc{} $<$
agreement alone \spiderAgreeAuroc{} $<$ v3 \spiderVthreeAuroc & Not as predicted\\
\bottomrule
\end{tabularx}
\end{table}

\paragraph{H1 and H2 hold.} The primary effect on Spider (\spiderAgreeFirstDelta) is larger
than its in-sample size on the 29 September BIRD run (\birdExpAgreeFirstDelta) and close to the
BIRD rerun under the same harness (\birdReAgreeFirstDelta). The larger Spider effect comes from
the baseline: the designed score fell from \birdExpDesignedAuroc{} on BIRD to
\spiderDesignedAuroc{} on Spider, while agreement-first stayed close (\birdExpAgreeFirstAuroc{}
and \spiderAgreeFirstAuroc). It is not evidence that the rule generalises better than it fits.
Agreement-first withholds about a fifth as many correct answers as the designed score without
losing \pass{} accuracy; that accuracy, \spiderAgreeFirstPassAcc, should be read against
Spider's base rate of \spiderAcc. H2 does not hold the share of wrong answers withheld fixed, and
that share also fell, from \spiderDesignedWrongWithheld{} to \rtSpiderAfWW, so part of the drop
in correct answers withheld is a lower abstention rate. The designed score cannot be set to
withhold exactly \rtSpiderAfWW{} of wrong answers, because many answers tie at its 0.59 cap: its
nearest thresholds withhold \rtSpiderDzLoWW{} or \rtSpiderDzHiWW{} of wrong answers, and
\rtSpiderDzLoCW{} or \rtSpiderDzHiCW{} of correct ones. Interpolating between them (breaking the
tie at random), the designed score would withhold about \rtSpiderDzCWMatched{} of correct answers
at the same share of wrong answers withheld, against \rtSpiderAfCW{} for agreement-first; on the
BIRD rerun, \rtBirdReDzCWMatched{} against \rtBirdReAfCW. AUROC and Figure~\ref{fig:rc} compare
the scores without a threshold. H2's
pre-registered criterion is a point estimate; the paired bootstrap interval for the change in
\pass{} accuracy is \abSpiderPadDzCI{} points, so a non-inferiority claim at a two-point margin
also holds on the interval.

\paragraph{H3 fails.} On both BIRD runs an empty final result was a strong sign of a wrong
answer (right \birdExpEmptyRight{} and \birdReEmptyRight{} of the time). On Spider it fired more
often and the answers were right \spiderEmptyRight{} of the time. We have not established why.
One candidate is that some Spider questions have empty correct answers over its small databases;
another is that the labeller treats an empty result that matches an empty reference as right.
We report the failure as pre-registered and do not adjust the rule.

\paragraph{H4 reverses one step.} Agreement alone ranked above agreement-first on Spider, the
opposite of BIRD, and v3 ranked above both. In point estimate the pre-registered primary rule is
the third of the three agreement-based rules on the confirmatory data. None of these differences
is resolved: agreement alone minus agreement-first is \rvSpiderAgMinusAf{} (95\% CI
\rvSpiderAgMinusAfCI), and v3 minus agreement alone \rvSpiderVthreeMinusAg{}
(\rvSpiderVthreeMinusAgCI). Agreement alone also withholds fewer correct answers
(\spiderAgreeCorrectWithheld{} against \spiderAgreeFirstCorrectWithheld) and passes more
(\spiderAgreePassShare{} against \spiderAgreeFirstPassShare) at about the same \pass{} accuracy
(\spiderAgreePassAcc{} against \spiderAgreeFirstPassAcc). One explanation, untested, is that the caps agreement-first adds lower some
correct answers that all five samples agreed on, and on easier data with fewer wrong answers that
costs more ranking than it gains.

\paragraph{Deviations from the pre-registration.} The answering prompt is the rebuilt trim
(\texttt{prompt\_sha d86e574ca9c8}), not \texttt{6566415c9593}, because the 29 September
trimmed file was lost with the harness. The judge and labelling prompts are the rebuilt ones.
The rule code, thresholds, truncation policy and exclusion rule are unchanged, which is what the
bundle hash checks. The pre-registration also named commit \texttt{cb66ea7}; the evaluation used the same bundle
from a later commit of the harness repository that adds the rebuilt harness. H2's criterion
was written as a point estimate; the interval reported above was added afterwards. The
registration named 100 wrong answers as the adequacy floor and the test reached
\spiderWrong.

\paragraph{Limits of this test.} Only \spiderWrong{} answers are labelled wrong, fewer than the
100 the pre-registration named as the point below which intervals are too wide to say much, and
the intervals are wide. Spider is easier than BIRD for this model.

\section{Replication and labeller sensitivity}\label{sec:labeller}

\paragraph{BIRD under the rebuilt harness.} The full rerun gave accuracy \birdReAcc{} on
\birdReN{} labelled answers. Agreement-first improved AUROC over the designed score by
\birdReAgreeFirstDelta{} (95\% CI \birdReAgreeFirstDeltaCI), from \birdReDesignedAuroc{} to
\birdReAgreeFirstAuroc, and withheld \birdReAgreeFirstCorrectWithheld{} of correct answers
against \birdReDesignedCorrectWithheld. Execution match was \birdReExecMatch{}, against
\birdExpExecMatch{} on 29 September, so the rebuilt harness answers BIRD about as well as the
original.

\paragraph{Sensitivity to label version.} Relabelling the 29 September BIRD
answers with the rebuilt labeller gave \lblAgree{} agreement with the original labels on
\lblN{} answers both kept ($\kappa=\lblKappa$). The rebuilt labeller is more lenient: it moved
\lblUp{} answers from wrong to right and \lblDown{} the other way, and accuracy rose from
\birdExpAcc{} to \birdRlAcc. Under the rebuilt labels the agreement-first gain on the same
answers is \birdRlAgreeFirstDelta{} (95\% CI \birdRlAgreeFirstDeltaCI), against
\birdExpAgreeFirstDelta{} under the original labels. The direction of the comparison was preserved
under both label versions. Two model-generated label sets agreeing is not independent validation
of either, and unvalidated labels affect every comparison in this paper, not only absolute
accuracy: AUROC, \pass{} accuracy and the judgement of whether a check helps all depend on them.

\paragraph{Under execution-match labels.} The benchmarks' own rule (identical result sets) is
too strict for answers written in an application's style, but it involves no model judgement.
Scored against it on every answer, AUROC was \abSpiderEmDz{} for the designed score,
\abSpiderEmAg{} for agreement alone and \abSpiderEmAf{} for agreement-first on Spider
(\abSpiderEmN{} answers, \abSpiderEmAcc{} matching), \abBirdReEmDz, \abBirdReEmAg{} and
\abBirdReEmAf{} on the BIRD rerun, and \abBirdExpEmDz, \abBirdExpEmAg{} and \abBirdExpEmAf{} on
the 29 September run. Both agreement-based scores rank above the designed score under these labels
too. Their order relative to each other is less stable. On Spider, agreement alone ranks higher
under the model-judged labels (\spiderAgreeAuroc{} against \spiderAgreeFirstAuroc), and
agreement-first ranks higher under execution match (\abSpiderEmAf{} against \abSpiderEmAg). The
two protocols also keep different answer sets, so we repeated the execution-match comparison on
the \spiderN{} answers the model-judged protocol keeps: \abSpiderEmKDz{} for the designed score,
\abSpiderEmKAg{} for agreement alone and \abSpiderEmKAf{} for agreement-first. The reversal
persists on the same answers, so it follows from the label definition. It is a reversal in point
estimates only. On these answers agreement alone minus agreement-first is \rvSpiderEmAgMinusAf{}
(95\% CI \rvSpiderEmAgMinusAfCI) under execution match and \rvSpiderAgMinusAf{}
(\rvSpiderAgMinusAfCI) under model-judged labels; neither interval excludes zero by a clear
margin. Agreement-first minus the designed score is \rvSpiderEmAfMinusDz{}
(\rvSpiderEmAfMinusDzCI) under execution match, and v3 minus agreement alone is
\rvSpiderEmVthreeMinusAg{} (\rvSpiderEmVthreeMinusAgCI). One possible reason,
untested, is that execution match counts an answer that differs only in form as wrong and the
checks fire on some of those answers. The checks' value for predicting correctness therefore depends on what is counted as
correct; their coverage cost does not.

\paragraph{A human pilot.} Two annotators, working separately and blind to every score, to the
model labels and to each other, labelled ten Spider answers: four whose reference the labeller had
judged wrong, three on which agreement alone and agreement-first make different gate decisions,
and three drawn at random. They agreed with each other and with the model label on all ten. Ten
answers are too few to estimate label error or to support an agreement statistic such as Cohen's
$\kappa$ \citep{cohen1960kappa}, and four of the ten come from the group where a lenient labeller
would show. The pilot shows the protocol works and finds no disagreement; it does not validate the
labels. A full audit, with the sample described in the released \texttt{score\_handcheck.py}, is
future work, and the results here should be read as resting on model-judged labels.

\begin{table}[t]
\centering\footnotesize\setlength{\tabcolsep}{4pt}
\caption{AUROC of agreement-first against the designed score in every evaluation.}\label{tab:all}
\begin{tabular}{lccccc}
\toprule
Evaluation & $n$ & Acc. & Designed & Agree-first & Difference (95\% CI)\\
\midrule
BIRD, 29 Sep (in-sample) & \birdExpN & \birdExpAcc & \birdExpDesignedAuroc & \birdExpAgreeFirstAuroc & \birdExpAgreeFirstDelta{} (\birdExpAgreeFirstDeltaCI)\\
BIRD, 29 Sep, rebuilt labels & \birdRlN & \birdRlAcc & \birdRlDesignedAuroc & \birdRlAgreeFirstAuroc & \birdRlAgreeFirstDelta{} (\birdRlAgreeFirstDeltaCI)\\
BIRD, rebuilt harness & \birdReN & \birdReAcc & \birdReDesignedAuroc & \birdReAgreeFirstAuroc & \birdReAgreeFirstDelta{} (\birdReAgreeFirstDeltaCI)\\
Spider (pre-registered) & \spiderN & \spiderAcc & \spiderDesignedAuroc & \spiderAgreeFirstAuroc & \spiderAgreeFirstDelta{} (\spiderAgreeFirstDeltaCI)\\
\bottomrule
\end{tabular}
\end{table}

\begin{table}[t]
\centering\footnotesize\setlength{\tabcolsep}{4pt}
\caption{Headline AUROC differences with 95\% intervals from a database-level cluster bootstrap
(model-judged labels). Answer-level intervals are given in the text.}\label{tab:cluster}
\begin{tabular}{lccc}
\toprule
Difference & BIRD, 29 Sep & BIRD, rebuilt & Spider\\
\midrule
Agree-first $-$ designed & \rtBirdExpClAfDz{} (\rtBirdExpClAfDzCI) & \rtBirdReClAfDz{} (\rtBirdReClAfDzCI) & \rtSpiderClAfDz{} (\rtSpiderClAfDzCI)\\
Agreement $-$ agree-first & \rtBirdExpClAgAf{} (\rtBirdExpClAgAfCI) & \rtBirdReClAgAf{} (\rtBirdReClAgAfCI) & \rtSpiderClAgAf{} (\rtSpiderClAgAfCI)\\
v3 $-$ agreement & \rtBirdExpClVtAg{} (\rtBirdExpClVtAgCI) & \rtBirdReClVtAg{} (\rtBirdReClVtAgCI) & \rtSpiderClVtAg{} (\rtSpiderClVtAgCI)\\
\bottomrule
\end{tabular}
\end{table}

\paragraph{The labeller and the reference.} On Spider the rebuilt labeller judged
\spiderRefWrong{} reference answers wrong and marked \spiderRefWrongMarkedCorrect{} of those
answers correct. If the labeller is too lenient, this is where it would show, and the hand check
draws part of its sample from this group.

\section{What the gain is made of}\label{sec:ablate}

These analyses were not pre-registered. They use the stored samples and labels, with no model
calls (\texttt{analyze\_ablation.py}).

\paragraph{Where the designed score loses its ranking.} Adding a constant to a score or scaling
it changes calibration and decisions at fixed thresholds, but not the ordering of answers, so it
cannot change AUROC. Ranking is lost only where a step reorders answers or creates ties.
Table~\ref{tab:stages} rebuilds the designed score one step at a time. The designed agreement
term $L_2$ alone ranks well. The large loss comes at the next step, $\mathrm{SAFETY}=\min(L_1,L_2)$:
$L_1$ drops to zero whenever the judge reports an unsupported claim, including claims in
commentary, and the minimum then discards $L_2$ for those answers. The weighted mean recovers a
little, because the other terms vary on some answers, and the hard caps cost a little more by
tying \abSpiderTiesAtCap{} Spider answers at exactly 0.59.

\begin{table}[t]
\centering\footnotesize\setlength{\tabcolsep}{4pt}
\caption{AUROC of the designed score rebuilt one step at a time.}\label{tab:stages}
\begin{tabular}{lcccc}
\toprule
Evaluation & $L_2$ alone & $\min(L_1,L_2)$ & Weighted mean & With caps (designed)\\
\midrule
BIRD, 29 Sep & \abBirdExpStL & \abBirdExpStSafety & \abBirdExpStWeighted & \abBirdExpStDesigned\\
BIRD, rebuilt harness & \abBirdReStL & \abBirdReStSafety & \abBirdReStWeighted & \abBirdReStDesigned\\
Spider & \abSpiderStL & \abSpiderStSafety & \abSpiderStWeighted & \abSpiderStDesigned\\
\bottomrule
\end{tabular}
\end{table}

\paragraph{The statistic under one matching policy.} The designed $L_2$ and the agreement share
$\Agr$ differ in two ways: the statistic (entropy over clusters against the share agreeing with
the shown answer), and the matching policy (full-row signatures against projection matching
with four-place rounding). Holding the matching relation fixed and changing only the statistic
isolates Proposition~\ref{prop:entropy}'s practical effect. Strict matching (same columns, same
multiset of rows) is an equivalence on results, so its clusters are well defined. Relaxed matching
is not transitive: a one-column result can match a two-column result through its first column,
and that result can match another one-column result through its second, while the two one-column
results do not match each other. For relaxed entropy we therefore cluster greedily in sample
order, each sample joining the first cluster whose first member it matches; clustering by
connected components of the match relation instead changed the clusters of
\rvBirdExpEntDiffN, \rvBirdReEntDiffN{} and \rvSpiderEntDiffN{} answers and moved AUROC from
\rvBirdExpEntGreedy{} to \rvBirdExpEntComp, \rvBirdReEntGreedy{} to \rvBirdReEntComp{} and
\rvSpiderEntGreedy{} to \rvSpiderEntComp. The relaxed comparison is still reported as
secondary. With strict matching, $\Agr$ minus the entropy score was \abBirdExpDStrict{} AUROC
(95\% CI \abBirdExpDStrictCI) on the 29 September BIRD run, \abBirdReDStrict{}
(\abBirdReDStrictCI) on the BIRD rerun and \abSpiderDStrict{} (\abSpiderDStrictCI) on Spider; with
relaxed matching, \abBirdExpDRelaxed, \abBirdReDRelaxed{} and \abSpiderDRelaxed. The proposition
is true, but the case it describes, a shown answer outside a large cluster, is rare enough that
the choice of statistic moves AUROC by less than 0.02. The gain over the designed score comes
from not taking the minimum with a checklist that fires on commentary.

\begin{table}[t]
\centering\footnotesize\setlength{\tabcolsep}{4pt}
\caption{Gate decisions by correctness (right / wrong). \flag{} answers are shown with a warning;
\abstain{} answers are withheld.}\label{tab:decisions}
\begin{tabular}{llccc}
\toprule
Evaluation & Score & \pass & \flag & \abstain\\
\midrule
Spider & Designed & \abSpiderDzPassRight{} / \abSpiderDzPassWrong & \abSpiderDzFlagRight{} / \abSpiderDzFlagWrong & \abSpiderDzAbstRight{} / \abSpiderDzAbstWrong\\
 & Agreement alone & \abSpiderAgPassRight{} / \abSpiderAgPassWrong & \abSpiderAgFlagRight{} / \abSpiderAgFlagWrong & \abSpiderAgAbstRight{} / \abSpiderAgAbstWrong\\
 & Agreement-first & \abSpiderAfPassRight{} / \abSpiderAfPassWrong & \abSpiderAfFlagRight{} / \abSpiderAfFlagWrong & \abSpiderAfAbstRight{} / \abSpiderAfAbstWrong\\
\midrule
BIRD, rebuilt & Designed & \abBirdReDzPassRight{} / \abBirdReDzPassWrong & \abBirdReDzFlagRight{} / \abBirdReDzFlagWrong & \abBirdReDzAbstRight{} / \abBirdReDzAbstWrong\\
 & Agreement alone & \abBirdReAgPassRight{} / \abBirdReAgPassWrong & \abBirdReAgFlagRight{} / \abBirdReAgFlagWrong & \abBirdReAgAbstRight{} / \abBirdReAgAbstWrong\\
 & Agreement-first & \abBirdReAfPassRight{} / \abBirdReAfPassWrong & \abBirdReAfFlagRight{} / \abBirdReAfFlagWrong & \abBirdReAfAbstRight{} / \abBirdReAfAbstWrong\\
\bottomrule
\end{tabular}
\end{table}

\begin{figure}[t]
\centering
\begin{tikzpicture}
\begin{groupplot}[group style={group size=2 by 1, horizontal sep=1.4cm},
  width=0.47\linewidth, height=0.36\linewidth, xmin=0, xmax=1, ymin=0.75, ymax=1,
  xlabel={Share of answers shown}, ylabel={Accuracy of those shown},
  tick label style={font=\footnotesize}, label style={font=\footnotesize},
  title style={font=\footnotesize}, legend style={font=\scriptsize, at={(0.03,0.03)}, anchor=south west}]
\nextgroupplot[title={Spider (pre-registered)}]
\addplot[black, thick, mark=none] table {rc_spider_designed.dat};
\addplot[blue!70!black, thick, dashed, mark=*, mark size=1.2pt] table {rc_spider_agreement.dat};
\addplot[red!70!black, thick, mark=square*, mark size=1.2pt] table {rc_spider_agree_first.dat};
\legend{Designed, Agreement alone, Agreement-first}
\nextgroupplot[title={BIRD, rebuilt harness}]
\addplot[black, thick, mark=none] table {rc_birdRe_designed.dat};
\addplot[blue!70!black, thick, dashed, mark=*, mark size=1.2pt] table {rc_birdRe_agreement.dat};
\addplot[red!70!black, thick, mark=square*, mark size=1.2pt] table {rc_birdRe_agree_first.dat};
\end{groupplot}
\end{tikzpicture}
\caption{Accuracy and coverage: accuracy of the answers shown as the threshold is lowered. Markers,
and every vertex of the designed score's curve, are attainable thresholds; the lines between them
are visual guides.}\label{fig:rc}
\end{figure}

\paragraph{What the checks do on top of agreement.} Agreement-first is agreement alone with
vetoes and caps added. Because the checks can only lower a score, they move answers out of
\pass{} and never into it. Table~\ref{tab:decisions} gives the counts. On Spider the checks
removed \abSpiderHeld{} answers from \pass: \abSpiderHeldRightN{} correct and
\abSpiderHeldWrongN{} wrong. All \abSpiderMovedFlagWrong{} wrong ones went to \flag, where they are
still shown with a warning; the number of wrong answers withheld was unchanged, while
\abSpiderMovedAbstRight{} more correct answers were withheld. \pass{} accuracy changed by
\abSpiderPadAg{} points (95\% CI \abSpiderPadAgCI). On BIRD the checks do more: on the rerun they
removed \abBirdReHeld{} answers from \pass{} (\abBirdReHeldWrongN{} wrong), raising \pass{}
accuracy by \abBirdRePadAg{} points (\abBirdRePadAgCI), with \abBirdReMovedAbstWrong{} wrong and
\abBirdReMovedAbstRight{} correct answers moved directly from \pass{} under agreement alone to
\abstain; the net rise in correct answers withheld also includes answers that agreement alone had
already flagged. On these benchmarks the
checks mainly turn passes into warnings and cost unqualified coverage; they rarely withhold a
wrong answer that agreement would have shown. Whether they do more on records where tables are
missing, joins are ambiguous and codes stand in for values is the question their design was for,
and these benchmarks do not answer it.

\paragraph{Accuracy and coverage.} The gate's thresholds were set by design and not fitted.
Figure~\ref{fig:rc} shows, for every threshold, the share of answers shown and their accuracy.
Agreement alone and agreement-first lie above the designed score over most of the range; at the
lowest coverages on BIRD the three are close. The discrete agreement share gives only a few
operating points.

\paragraph{The judge as a signal.} A GPT-5 mini judge already reads every answer: it lists the
question's parts, the parts the answer covers, and any issues, each classed by kind and by whether
it concerns the answer or the commentary. Before computing any AUROC we fixed two scores from its
stored output: the share of parts covered minus the number of answer-scope issues, and the same
with issues of any scope. No new model call was made. Table~\ref{tab:judge} gives the result. The
judge ranks answers well below agreement in all three evaluations, and adding it to agreement as
a tie-breaker within agreement levels gains little. This judge was built to critique, not to say
whether the answer is right, and it never sees the reference.

\paragraph{A correctness judge.} The first review asked for the baseline the critique judge is
not: a model asked directly how likely an answer is to be right, as in
\citet{richardson2026predicts}. After that review, and without pre-registration, we ran one on
every stored answer. GPT-5.1 saw the question, the hint, the queries with up to 20 rows each and
the written answer, never the reference, the other samples or any score, and returned a
probability (\texttt{bench/judge\_correct.py}). Table~\ref{tab:pcorrect} gives the result. On
its own the judge ranked above agreement on Spider, level with it on the 29 September BIRD run
and below it on the BIRD rerun. The plain average of the two, with no weight fitted, ranked above
agreement in all three evaluations, and the intervals stay above zero under the database-level
bootstrap (\rjBirdExpClMeanMinusAgCI, \rjBirdReClMeanMinusAgCI{} and \rjSpiderClMeanMinusAgCI).

Three things limit this result. The judge is the model that wrote the answers, and the labels
come from another model from the same provider, so part of the agreement between judge and labels
may be shared habit; under execution-match labels the judge ranks below agreement in every
evaluation (on Spider \rjSpiderEmJudge{} against \rjSpiderEmAg), and the average's gain over
agreement shrinks to \rjBirdExpEmMeanMinusAg{} (\rjBirdExpEmMeanMinusAgCI),
\rjBirdReEmMeanMinusAg{} (\rjBirdReEmMeanMinusAgCI) and \rjSpiderEmMeanMinusAg{}
(\rjSpiderEmMeanMinusAgCI). Its probabilities are not calibrated: its Brier score is worse than a
constant at the base rate in every evaluation (\rjSpiderBrier{} against \rjSpiderBrierConst{} on
Spider). And the average was chosen after seeing the critique-judge result, not fixed in
advance. Whether a judge from another model family gives the same gain is untested.

\begin{table}[t]
\centering\footnotesize\setlength{\tabcolsep}{3pt}
\caption{A correctness judge (GPT-5.1 asked for the probability that the answer is right) against
agreement, model-judged labels, answer-level 95\% intervals. Added after review; not
pre-registered.}\label{tab:pcorrect}
\begin{tabular}{lccccc}
\toprule
Evaluation & \makecell{Judge\\(95\% CI)} & Agreement & \makecell{Judge $-$\\agreement} & Average & \makecell{Average $-$\\agreement}\\
\midrule
BIRD, 29 Sep & \rjBirdExpAuroc{} (\rjBirdExpAurocCI) & \birdExpAgreeAuroc & \makecell{\rjBirdExpMinusAg\\(\rjBirdExpMinusAgCI)} & \rjBirdExpMean & \makecell{\rjBirdExpMeanMinusAg\\(\rjBirdExpMeanMinusAgCI)}\\
BIRD, rebuilt & \rjBirdReAuroc{} (\rjBirdReAurocCI) & \birdReAgreeAuroc & \makecell{\rjBirdReMinusAg\\(\rjBirdReMinusAgCI)} & \rjBirdReMean & \makecell{\rjBirdReMeanMinusAg\\(\rjBirdReMeanMinusAgCI)}\\
Spider & \rjSpiderAuroc{} (\rjSpiderAurocCI) & \spiderAgreeAuroc & \makecell{\rjSpiderMinusAg\\(\rjSpiderMinusAgCI)} & \rjSpiderMean & \makecell{\rjSpiderMeanMinusAg\\(\rjSpiderMeanMinusAgCI)}\\
\bottomrule
\end{tabular}
\end{table}

\paragraph{A second model family.} To test whether these signals depend on one provider, we
asked Claude Sonnet 4.5 each Spider question once, with the same tools and the same trimmed
system prompt, and compared its final result with the scored GPT-5.1 answer using the relaxed
match of Section~\ref{sec:score} (\texttt{crossmodel.py}; \rxN{} answers measured, the rest
undetermined by truncation). This was run after review and is not pre-registered. The second
model's answer differed from the scored answer on \rxFires{} of answers, and those were right
\rxRightWhenFired{} of the time against \spiderAcc{} overall. Used as a tie-break inside
agreement levels, it raised AUROC from \spiderAgreeAuroc{} to \rxAgreeCross{}
(\rxAgreeCrossDelta, database-level 95\% CI \rxAgreeCrossDeltaClCI). Unlike the correctness
judge, the gain held under execution-match labels (\rxEmAgreeCrossDelta, \rxEmAgreeCrossDeltaCI).
Added to the average of agreement and the correctness judge it changed little
(\rxMeanJudgeCross, difference \rxTrioMinusMeanCI). The frozen v3 code from 29 September already
carried this signal as a cap at \flag; evaluated by the frozen bundle with the empty-result and
cross-model signals, v3 reached AUROC \rxVthreeAuroc{} (\rxVthreeAurocCI), \rxVthreeDelta{}
over the designed score (\rxVthreeDeltaCI), with \pass{} accuracy \rxVthreePassAcc{} on
\rxVthreePassShare{} of answers, withholding \rxVthreeWrongWithheld{} of wrong answers and
\rxVthreeCorrectWithheld{} of correct ones (agreement-first on the same answers:
\rxAfSameAuroc, \rxAfSameWrongWithheld{} and \rxAfSameCorrectWithheld). The cost is a second
answering run per question; the second model's own answers are not labelled here.

\begin{table}[t]
\centering\footnotesize\setlength{\tabcolsep}{4pt}
\caption{The pipeline's own judge scored as a standalone signal, model-judged labels. The judge
score is the share of question parts covered minus the number of answer-scope issues.}\label{tab:judge}
\begin{tabular}{lcccc}
\toprule
Evaluation & \makecell{Judge\\(95\% CI)} & Agreement & \makecell{Judge $-$ agreement\\(95\% CI)} & \makecell{Agreement, judge\\as tie-break}\\
\midrule
BIRD, 29 Sep & \rvBirdExpJudgeAuroc{} (\rvBirdExpJudgeAurocCI) & \birdExpAgreeAuroc & \rvBirdExpJudgeMinusAgree{} (\rvBirdExpJudgeMinusAgreeCI) & \rvBirdExpAgreeJudgeAuroc\\
BIRD, rebuilt harness & \rvBirdReJudgeAuroc{} (\rvBirdReJudgeAurocCI) & \birdReAgreeAuroc & \rvBirdReJudgeMinusAgree{} (\rvBirdReJudgeMinusAgreeCI) & \rvBirdReAgreeJudgeAuroc\\
Spider & \rvSpiderJudgeAuroc{} (\rvSpiderJudgeAurocCI) & \spiderAgreeAuroc & \rvSpiderJudgeMinusAgree{} (\rvSpiderJudgeMinusAgreeCI) & \rvSpiderAgreeJudgeAuroc\\
\bottomrule
\end{tabular}
\end{table}

\paragraph{Calibration.} None of these scores was designed as a probability, and the agreement
share takes only six values. Table~\ref{tab:calib} gives expected calibration error and Brier
score for completeness. Agreement-based scores are better calibrated than the designed score in
every evaluation. That is a relative claim. On Spider, where \spiderAcc{} of answers are right, a
constant prediction at the base rate has a Brier score of \rvSpiderBrierConst; agreement-first
(\rvSpiderBrierAgreeFirst) does worse than that and agreement alone (\rvSpiderBrierAgree) only a
little better. On the BIRD rerun the constant gives \rvBirdReBrierConst{} against
\rvBirdReBrierAgreeFirst{} for agreement-first. A deployment that wants a probability should fit one
on its own labelled answers.

\paragraph{Cost of sampling.} Agreement needs $K=5$ answers per question. On the complete BIRD
rerun the answering model made \rvTokCallsPerQ{} calls per question, with about \rvTokInPerQ{}
input tokens (about \rvTokInPerSample{} per sample, mostly the system prompt and tool results
re-read at each step) and \rvTokOutPerQ{} output tokens; the judge added \rvTokJudgeInPerQ{} input
and \rvTokJudgeOutPerQ{} output tokens. Wall-clock time per question, with five samples run in
parallel on a discounted, slower service tier and eight questions at a time, had a median of
\rvBirdReSecMedian{} seconds (90th percentile \rvBirdReSecPninety) on BIRD and
\rvSpiderSecMedian{} seconds (\rvSpiderSecPninety) on Spider. Five samples cost about five times
one answer in tokens; latency depends on whether the samples run in parallel.

\begin{table}[t]
\centering\footnotesize\setlength{\tabcolsep}{4pt}
\caption{Expected calibration error (10 bins) and Brier score.}\label{tab:calib}
\begin{tabular}{lcccccc}
\toprule
& \multicolumn{2}{c}{Designed} & \multicolumn{2}{c}{Agreement alone} & \multicolumn{2}{c}{Agreement-first}\\
Evaluation & ECE & Brier & ECE & Brier & ECE & Brier\\
\midrule
BIRD, 29 Sep & \birdExpDesignedEce & \birdExpDesignedBrier & \birdExpAgreeEce & \birdExpAgreeBrier & \birdExpAgreeFirstEce & \birdExpAgreeFirstBrier\\
BIRD, rebuilt harness & \birdReDesignedEce & \birdReDesignedBrier & \birdReAgreeEce & \birdReAgreeBrier & \birdReAgreeFirstEce & \birdReAgreeFirstBrier\\
Spider & \spiderDesignedEce & \spiderDesignedBrier & \spiderAgreeEce & \spiderAgreeBrier & \spiderAgreeFirstEce & \spiderAgreeFirstBrier\\
\bottomrule
\end{tabular}
\end{table}

\section{Discussion}

\paragraph{What the score can claim.} On two public benchmarks with one model, letting agreement
among samples decide the gate ranks answers better than the designed weighted score and withholds
far fewer correct answers at the same or better \pass{} accuracy. These results do not single out
the frozen rule among agreement-based rules: on Spider agreement alone and v3 ranked at least as
well, within the intervals. The claim that a designed score
of 0.80 means ``right about 80\% of the time'' does not hold across the scale and we do not make
it.

\paragraph{Agreement, not the statistic.} Proposition~\ref{prop:entropy} identifies a real blind
spot in the designed entropy term, but under a common matching policy it costs less than 0.02
AUROC (Section~\ref{sec:ablate}). The designed score lost its ranking where agreement met the
checklist through a minimum, and the checklist fired on commentary. Agreement with the shown
answer remains the better quantity to gate on, since it is the one the gate is about, but the
evidence for preferring it over the designed entropy is modest.

\paragraph{The checks have not yet earned their coverage.} The deterministic checks catch every
planted error of their kind. On real answers from these benchmarks they and the judge caps rarely
fire on wrong answers that agreement missed. On Spider they moved \abSpiderHeldRightN{} correct
and \abSpiderHeldWrongN{} wrong answers out of \pass, put the wrong ones behind a warning, and
withheld no additional wrong answer. They were designed for extracted,
multi-source records where tables, joins and codes go wrong, and their case has to be made on
such records. Poor discrimination of core-answer correctness also does not make a check useless
for other governance purposes, such as flagging unsupported commentary. It suggests that those
purposes should not be folded into the same confidence number; part of the design problem may be
asking one scalar to carry several kinds of assurance.

\paragraph{Commentary is a product fault.} The prompt's instruction to enrich every answer
produces unsupported commentary in \exFireUnsupportedCommentaryShare{} of BIRD answers. Capping
unsupported claims only when they concern the answer raises the designed score's \pass{} share
from \exEDesignedPassShare{} to \exEUnsupportedAnswerOnlyPassShare{} at the same \pass{} accuracy
(\exEUnsupportedAnswerOnlyPassAcc); ignoring commentary entirely leaves the user reading it. The
frozen rules record it and name it without changing the gate.

\paragraph{Confident misreadings remain.} When all five samples agree and are wrong, agreement
cannot help. The empty-result signal was meant for part of this and did not meet its
pre-registered effect-size criterion on Spider.
This is the open problem the rules leave, and Section~\ref{sec:next} argues that it is a
problem of context.

\section{Next steps: context}\label{sec:next}

This section is forward-looking. Nothing in it is measured in this paper.

\paragraph{The limitation.} Agreement among samples shows whether the model reads a question the
same way each time. It cannot resolve an intended business definition that is absent from the
evidence the model and the checker can see. If every sample makes the same choice of meaning,
agreement is complete whether or not that meaning is the one the organisation uses, and the
deterministic checks can all pass: the tables exist, the joins pass the implemented overlap check
and every number is a returned cell.
\ifstagingcase Appendix~\ref{app:case} describes such a case from a staging
implementation over a government organisation's hazardous-materials records. Asked how much
water was used, the system read storage-reporting tables and reported the recorded weight of
products in which water is listed; with an expert-approved definition of usage it followed the
issue transactions through each product's composition and reported the weight of water in what
was issued. The two recorded outputs cover different reporting periods, so their numerical
difference illustrates the error and does not estimate its size.
\else A typical case, described here without data: asked how much of a substance was used, a
query reads tables built for regulatory storage reporting and returns the recorded weight of the
products that contain it, which is a different quantity from the amount of the substance
actually issued. The query is valid, every sample makes the same choice, and every check passes;
only the organisation's own definition of usage exposes the error.
\fi

\paragraph{Definitions as checks.} The case suggests a use for approved definitions beyond
improving the answer. A recorded rule such as ``usage questions follow the issue transactions;
the storage-reporting tables answer only storage questions'' can be tested against a trace
independently of how the answer was produced: an answer about usage whose queries read the
storage-reporting tables breaks it, and the gate can name the rule. Such checks would sit beside
the existing vetoes. Whether they catch confident misreadings without withholding many correct
answers is an empirical question, and Section~\ref{sec:ablate} shows that checks can cost more
coverage than they repay.

\paragraph{The experiment.} Context can help in two places, the answering system and the
checking system, and a single before-and-after case cannot tell them apart. The test we will run
crosses the two on held-out questions in four conditions: no context for either; context for
answer generation only; context for checking only; and context for both. In the checking-only
condition the generated answer and its trace are held fixed, so any gain is the checker's alone.
The context rules are frozen before the held-out questions are evaluated. We will report answer
correctness, wrong answers passed, correct answers withheld, and the number of wrong answers that
arrived with full agreement. BIRD's evidence hints give a public version of the same contrast,
since each hint is a recorded definition. Every answer in this paper was generated with the hint,
so the contrast needs a second answering run without hints; it is the first experiment we will
run.

\paragraph{In deployment.} Until that test is done, the measurements here support a narrower
practice. A \pass{} should require that at least four of five samples reached the same result.
Whether the checks may also block a \pass{} is a coverage decision, not a ranking one: on Spider
they moved \abSpiderHeldRightN{} correct and \abSpiderHeldWrongN{} wrong answers from \pass{} to a
warning, and agreement alone ranked as well as the rule with checks. Every \flag{} or \abstain{}
should name what caused it so a reviewer can act. Averaging agreement with a judge asked directly
whether the answer is right ranked better than agreement alone in every evaluation, but here the
judge graded its own model's answers, its probabilities are not calibrated and the gain shrinks
under execution-match labels; it is a candidate for the gate that needs a cross-family test, not
yet a recommendation. Agreement between two model families held its gain under both label
definitions on Spider and is the stronger candidate, at the price of a second answering run.
The score should not be presented as a probability. Each answer an expert approves or corrects
is a labelled example in the customer's own terms, and a set of them is what a per-customer
calibration of the thresholds would need.

\section{Limitations}

\begin{itemize}
\item \textbf{Labels are model-judged}, and people have checked only a ten-answer pilot; the two labellers
differ by about ten points of accuracy on the same answers.
\item \textbf{One answering model} (GPT-5.1) and two benchmarks, neither in the system's own
domain. Whether the loss at $\min(L_1,L_2)$ belongs to the score or to this model's habit of
adding commentary needs a second model.
\item \textbf{The correctness judge grades its own model.} It was added after review, is not
pre-registered, is the answering model (GPT-5.1), and shares a provider with the labeller; a judge
from another family was not run. The second-family check (one Claude Sonnet 4.5 answer per
question) was run on Spider only.
\item \textbf{Weak confirmatory design.} The registration fixed the rule and analysis but not the
harness or prompts, H1 was close to a consequence of the BIRD decomposition, and the
pre-registered primary rule was not the best agreement-based rule on Spider.
\item \textbf{The Spider test has \spiderWrong{} wrong answers}, below the 100 the
pre-registration set as adequate.
\item \textbf{The harness was rebuilt} between the exploratory run and the confirmation, and the
answering, judge and labelling prompts differ from the 29 September ones as described; the
scorer and rules do not.
\item \textbf{Reference implementation.} We test the reference scorer written from the design
text, not the product scorer, which differs in places (for example its RETRIEVE term without a
retrieval step).
\item \textbf{Untested parts.} The competence-boundary layer and RETRIEVE are not tested; the
stand-in check is tested only by planting.
\item \textbf{The benchmark rule} asking the last query to match the requested output is not in
the production prompt.
\item \textbf{One run per benchmark.} Run-to-run variance is not measured, beyond the BIRD rerun.
\item \textbf{The checks' value is untested where it should matter.} On these benchmarks the checks
always cost unqualified coverage, and whether they improve ranking depends on the dataset and the
correctness definition.
\item \textbf{Core-answer correctness only.} The labels judge the answer to the question asked;
unsupported commentary is recorded separately and is not what the scores are evaluated against.
\item \textbf{Exploratory analyses.} Sections~\ref{sec:explore} and~\ref{sec:ablate} and the
fitted variant are not pre-registered; only H1 to H4 on Spider are confirmatory.
\end{itemize}

\section{Conclusion}

A governed answer score can be audited on public data at modest cost, and this one did not hold
up as designed. Its deterministic checks catch the errors they are built for; its ranking came
from agreement among samples, and was lost where agreement was combined by a minimum with a
checklist that fired on commentary. A rule frozen before the Spider test that lets agreement decide the gate improved ranking and
coverage of core-answer correctness on Spider as it had on BIRD, though agreement alone did as
well, and both agreement-based scores outperformed
the designed score under every labelling protocol tested. The improvement owes little to the choice
of agreement statistic. Averaging agreement with a correctness judge ranked better still in every
evaluation under model-judged labels; with the judge, the answering model and the labeller from
one provider, that needs a cross-family test before it changes the gate. One answer from a second model family
added ranking under both label definitions on Spider. The added checks reduced unqualified coverage, and their ranking benefit
depended on the dataset and the correctness definition; on Spider under model-judged labels they
moved \abSpiderHeldWrongN{} additional wrong answers to warnings and withheld no additional wrong answer. The
empty-result signal did not meet its pre-registered criterion on Spider. Agreement cannot supply a definition the evidence does not contain; testing whether
approved definitions improve the answers, the checks, or both is the next study.

\section*{Competing interests}

The authors founded and work for the company that develops the system under test. The scorer, the
rules and the pre-registration were fixed before any confirmatory answer existed, and every result
is reported, including those unfavourable to the design.

\section*{Reproducibility}

Code, scripts and data are in the public repository\\\url{https://github.com/numberz-ai/numberz-ies-tdd-h-calibration}.
The branch \texttt{prereg} carries the history of the private repository in which the study was
frozen, pushed unaltered: the scorer bundle \texttt{6e3e723e8f76711d} at commit \texttt{cb66ea7}
(29 September 2026, 10:17 EDT) and the pre-registration, which names that commit and bundle, at
commit \texttt{f462092} (11:46 the same day), before any Spider answer was generated. That
repository was private and its history was made public only on 29 October 2026, so these dates
rest on the commit history and are not independently timestamped. The rebuilt answering, judging
and labelling harness is in \texttt{bench/}. The scripts that compute every number in this
version are in \texttt{paper\_analysis/}; \texttt{evaluate.py} prints the bundle hash and the
hashes of its inputs, and \texttt{make\_numbers.py} and \texttt{make\_review\_numbers.py} refuse
or record any result whose bundle or inputs do not match. The answer, judge and label files are
attached to the \texttt{v2.0} release. In version 1 this section described the repository as it
was intended to be released rather than as it was; this version describes what is public. The
29 September BIRD evaluation was reproduced on a second host (Linux, Python 3.13) with identical
results for every reported score.

\bibliographystyle{plainnat}
\bibliography{refs}

\appendix

\section{Prompt sections removed for the benchmark}\label{app:prompt}

Removed: memory and metadata precedence; document-versus-data routing and vector-first
retrieval, tool discovery and proxying; the vector-database strategy; a domain table guide;
citation and document handling; visualisation rules and chart-tool schemas; tool-output
truncation; protocol-specific SQL repair; workflow creation; forecasting. Kept: identity and
domain, priority hierarchy, core rules, SQL construction rules, the no-vector-database strategy,
interpretation and clarification policies, response format, numeric fidelity, data verification,
name resolution, insight generation, and the self-check. The cut is a list of line ranges with a
reason for each dropped line (\texttt{prompt/trim\_spec.json}); the script only deletes and stops
if the source prompt changes.

\section{Planted errors}\label{app:inject}

\begin{tabularx}{\linewidth}{lX}
\toprule
Error & How it is planted\\
\midrule
Fake table & A table in the final SQL renamed to one that does not exist\\
Bad join & The join rewired to a column whose keys overlap the other side's by less than 0.85 in both directions\\
Stand-in as value & A $-1$ code written into a numeric result column, recorded as that column's stand-in (simulated; BIRD has few)\\
Wrong number & One number in the written answer changed to a value in no returned row\\
Wrong time window & A year or date in the filter shifted by one year\\
Wrong aggregate & SUM$\leftrightarrow$AVG, or COUNT$\leftrightarrow$COUNT DISTINCT\\
Intent flip & Sort direction, MAX$\leftrightarrow$MIN, or a comparison reversed\\
Partial answer & Half the rows, or the last requested column, left out\\
\bottomrule
\end{tabularx}

\section{Fixes to the reference scorer before the BIRD run}

Names defined with \texttt{WITH} are not tables. Join readiness takes the better of the two
directions of key overlap; read literally, the design's rule depends on the order of the
\texttt{ON} clause. The SQLite catalogue and table-valued functions are not fabricated tables;
before the fix they caused false caps in the full run.

\ifstagingcase
\section{A worked case from a staging implementation}\label{app:case}

This appendix describes questions asked of the system in a staging implementation over a
government organisation's hazardous-materials records. The organisation is not named. Table and
column names are generalised and the queries are simplified; they follow the structure of the
queries recorded in the session logs. The figures come from those sessions, are not part of this
study's measurements, cover different reporting periods, and have not been independently
reconciled; they illustrate the kind of error and say nothing about its size.

\subsection*{The records}

Two families of tables describe chemicals here, and they answer different questions. Annual
storage-reporting tables, one per reporting year, were built for the federal chemical-inventory
rule. They record, per site and chemical registry (CAS) number, quantities as signed transaction
deltas in pounds. Issue tables record what was handed out at issue points: one row per issue
transaction, linked to an inventory item, which is linked to a product and its safety data sheet.
The safety data sheet lists the product's constituents, each with a CAS number and a percentage
by weight, and the system reads that section as rows. A chemical that is a constituent of many
products, such as water, appears in the issue history only through those composition rows.

\subsection*{First answer: the wrong quantity}

The user asked how many pounds of CAS 7732-18-5 (water) were used per year. The model read
\emph{used} as a question for the storage-reporting tables, which carry a CAS column and a
pounds column. Its query had this shape, grouped by calendar year of the transaction date:
{\small
\begin{verbatim}
WITH s AS (
  SELECT cas, site_id, tran_date, cas_qty_lbs FROM storage_report_2022
  UNION ALL SELECT cas, site_id, tran_date, cas_qty_lbs FROM storage_report_2023
  UNION ALL SELECT cas, site_id, tran_date, cas_qty_lbs FROM storage_report_2024
  UNION ALL SELECT cas, site_id, tran_date, cas_qty_lbs FROM storage_report_2025
)
SELECT YEAR(tran_date)        AS yr,
       -SUM(cas_qty_lbs)      AS pounds_issued   -- removals are negative deltas
FROM   s
WHERE  cas = '7732-18-5' AND cas_qty_lbs < 0
GROUP  BY YEAR(tran_date);
\end{verbatim}
}
It returned 230,158 pounds for 2022, 192,337 for 2023, 329,949 for 2024 and 4,589 for 2025, and
the answer was written up with a chart. The tables exist, the query has no join, no stand-in
values were returned, and every number in the text is a returned cell; repeated runs made the
same choice. The quantity was the wrong one: these tables report, for storage reporting, the
weight of products in which water is listed, not the water in what was issued.

\subsection*{Second attempt}

Told to use the issue data, the model joined the issue history to a chemical list on an assumed
key, found no rows, and said so: it listed the tables it had checked, did not substitute the
storage-reporting figures, and said that no rows for so common a chemical more likely meant a
broken join path than no use.

\subsection*{The approved definition and the answer under it}

The organisation's subject-matter expert stated what \emph{usage} means there: the weight of the
chemical itself in what was issued, from the issue transactions through each product's safety
data sheet. Model weights and application code were unchanged; the system received this
definition and the routing instruction that follows from it as recorded context. The query
followed the path the schema does not show, from an issue to its product's composition rows. The
analysis was annual; fiscal 2024 is shown:
{\small
\begin{verbatim}
SELECT SUM(h.qty_issued_kg * 2.20462 * sc.weight_pct / 100.0) AS water_lbs
FROM   inventory_issue_history h
JOIN   inventory_details d ON d.inventory_id = h.inventory_id
JOIN   sds_composition  sc ON sc.sds_id      = d.sds_id
WHERE  sc.cas = '7732-18-5'
  AND  h.issue_date >= '2023-10-01' AND h.issue_date < '2024-10-01';   -- fiscal 2024
\end{verbatim}
}
It returned 2,618.94 pounds of water for fiscal 2024. The first answer's 2024 figure covers the
calendar year and a different quantity, so the two are not a same-period comparison. The
displayed query is evidence of the intended calculation, not proof that its total is right. An
independent reconciliation would record the reporting period, the number of transactions
included, the unit normalisation, the version of each safety data sheet used, how missing or
ranged percentages were treated, and the expected join cardinality (one composition row per
product and CAS, so that no issue is counted twice), and would compare the total with one the
expert computes separately. That reconciliation has not been done.

Related sessions on hexavalent chromium surfaced the same kind of problem in other forms:
one substance under several names and registry numbers, identifiers stored with inconsistent
punctuation, a table named for one year holding rows from others, signed quantities, and site
codes without names. Each was fixed by a short recorded rule, not by a change to the model.

\subsection*{What the case shows and what it does not}

Each fix was a short statement of meaning, approved by someone who owns that meaning: which
tables answer which kind of question, what measure a word resolves to, which identifiers name
the same thing, how a quantity's sign is read, and what to do when the evidence is absent. The
case shows that such statements changed the answers. It does not show that they improve the
confidence score's ability to separate right answers from wrong ones; that is the experiment of
Section~\ref{sec:next}.
\fi

\end{document}

%% file: numbers.tex
\newcommand{\birdExpN}{1{,}515}

\newcommand{\birdExpAcc}{72.9\%}
\newcommand{\birdExpWrong}{410}
\newcommand{\birdExpRefWrong}{161}
\newcommand{\birdExpRefWrongPct}{10.5\%}

\newcommand{\birdExpExecMatch}{54.2\%}

\newcommand{\birdExpDesignedAuroc}{0.624}
\newcommand{\birdExpDesignedAurocCI}{0.595--0.652}
\newcommand{\birdExpDesignedPassAcc}{77.9\%}
\newcommand{\birdExpDesignedPassShare}{58.5\%}

\newcommand{\birdExpDesignedCorrectWithheld}{35.7\%}
\newcommand{\birdExpDesignedWrongWithheld}{43.7\%}
\newcommand{\birdExpDesignedEce}{0.155}
\newcommand{\birdExpDesignedBrier}{0.209}
\newcommand{\birdExpRuleAuroc}{0.694}
\newcommand{\birdExpRuleAurocCI}{0.666--0.723}
\newcommand{\birdExpRulePassAcc}{77.4\%}
\newcommand{\birdExpRulePassShare}{80.0\%}

\newcommand{\birdExpRuleCorrectWithheld}{14.0\%}
\newcommand{\birdExpRuleWrongWithheld}{25.4\%}

\newcommand{\birdExpAgreeFirstAuroc}{0.730}
\newcommand{\birdExpAgreeFirstAurocCI}{0.702--0.756}
\newcommand{\birdExpAgreeFirstPassAcc}{83.6\%}
\newcommand{\birdExpAgreeFirstPassShare}{62.0\%}

\newcommand{\birdExpAgreeFirstCorrectWithheld}{8.3\%}
\newcommand{\birdExpAgreeFirstWrongWithheld}{37.3\%}
\newcommand{\birdExpAgreeFirstEce}{0.098}
\newcommand{\birdExpAgreeFirstBrier}{0.175}
\newcommand{\birdExpAgreeFirstDelta}{+0.106}
\newcommand{\birdExpAgreeFirstDeltaCI}{+0.077 to +0.134}

\newcommand{\birdExpVthreeAuroc}{0.740}
\newcommand{\birdExpVthreeAurocCI}{0.713--0.768}
\newcommand{\birdExpVthreePassAcc}{84.6\%}
\newcommand{\birdExpVthreePassShare}{61.3\%}

\newcommand{\birdExpVthreeCorrectWithheld}{8.3\%}
\newcommand{\birdExpVthreeWrongWithheld}{37.3\%}

\newcommand{\birdExpAgreeAuroc}{0.721}
\newcommand{\birdExpAgreeAurocCI}{0.692--0.748}
\newcommand{\birdExpAgreePassAcc}{81.3\%}
\newcommand{\birdExpAgreePassShare}{78.9\%}

\newcommand{\birdExpAgreeCorrectWithheld}{6.3\%}
\newcommand{\birdExpAgreeWrongWithheld}{32.9\%}
\newcommand{\birdExpAgreeEce}{0.130}
\newcommand{\birdExpAgreeBrier}{0.183}
\newcommand{\birdExpFittedAuroc}{0.718}
\newcommand{\birdExpFittedAurocCI}{0.685--0.749}
\newcommand{\birdExpFittedPassAcc}{86.0\%}
\newcommand{\birdExpFittedPassShare}{53.3\%}

\newcommand{\birdExpFittedCorrectWithheld}{17.7\%}
\newcommand{\birdExpFittedWrongWithheld}{51.5\%}

\newcommand{\birdExpEmptyRight}{6.5\%}

\newcommand{\birdRlN}{1{,}441}

\newcommand{\birdRlAcc}{82.6\%}

\newcommand{\birdRlDesignedAuroc}{0.631}

\newcommand{\birdRlAgreeFirstAuroc}{0.756}

\newcommand{\birdRlAgreeFirstDelta}{+0.124}
\newcommand{\birdRlAgreeFirstDeltaCI}{+0.089 to +0.158}

\newcommand{\birdReN}{1{,}424}

\newcommand{\birdReAcc}{82.5\%}

\newcommand{\birdReExecMatch}{54.4\%}

\newcommand{\birdReDesignedAuroc}{0.589}

\newcommand{\birdReDesignedCorrectWithheld}{47.5\%}

\newcommand{\birdReDesignedEce}{0.157}
\newcommand{\birdReDesignedBrier}{0.166}

\newcommand{\birdReAgreeFirstAuroc}{0.755}

\newcommand{\birdReAgreeFirstCorrectWithheld}{11.4\%}

\newcommand{\birdReAgreeFirstEce}{0.093}
\newcommand{\birdReAgreeFirstBrier}{0.133}
\newcommand{\birdReAgreeFirstDelta}{+0.165}
\newcommand{\birdReAgreeFirstDeltaCI}{+0.129 to +0.204}

\newcommand{\birdReAgreeAuroc}{0.739}

\newcommand{\birdReAgreeEce}{0.097}
\newcommand{\birdReAgreeBrier}{0.134}

\newcommand{\birdReEmptyRight}{3.2\%}

\newcommand{\spiderRead}{1{,}034}
\newcommand{\spiderN}{983}
\newcommand{\spiderDropped}{51}

\newcommand{\spiderAcc}{92.6\%}
\newcommand{\spiderWrong}{73}
\newcommand{\spiderRefWrong}{127}

\newcommand{\spiderRefWrongMarkedCorrect}{120}

\newcommand{\spiderDesignedAuroc}{0.566}
\newcommand{\spiderDesignedAurocCI}{0.503--0.626}
\newcommand{\spiderDesignedPassAcc}{93.9\%}
\newcommand{\spiderDesignedPassShare}{49.7\%}

\newcommand{\spiderDesignedCorrectWithheld}{49.0\%}
\newcommand{\spiderDesignedWrongWithheld}{54.8\%}
\newcommand{\spiderDesignedEce}{0.187}
\newcommand{\spiderDesignedBrier}{0.125}
\newcommand{\spiderRuleAuroc}{0.682}
\newcommand{\spiderRuleAurocCI}{0.623--0.740}
\newcommand{\spiderRulePassAcc}{93.2\%}
\newcommand{\spiderRulePassShare}{85.1\%}

\newcommand{\spiderRuleCorrectWithheld}{13.7\%}
\newcommand{\spiderRuleWrongWithheld}{15.1\%}

\newcommand{\spiderAgreeFirstAuroc}{0.725}
\newcommand{\spiderAgreeFirstAurocCI}{0.663--0.787}
\newcommand{\spiderAgreeFirstPassAcc}{95.7\%}
\newcommand{\spiderAgreeFirstPassShare}{73.6\%}

\newcommand{\spiderAgreeFirstCorrectWithheld}{8.8\%}
\newcommand{\spiderAgreeFirstWrongWithheld}{35.6\%}
\newcommand{\spiderAgreeFirstEce}{0.088}
\newcommand{\spiderAgreeFirstBrier}{0.077}
\newcommand{\spiderAgreeFirstDelta}{+0.159}
\newcommand{\spiderAgreeFirstDeltaCI}{+0.097 to +0.221}
\newcommand{\spiderAgreeFirstSensAuroc}{0.721}

\newcommand{\spiderVthreeAuroc}{0.767}
\newcommand{\spiderVthreeAurocCI}{0.707--0.821}
\newcommand{\spiderVthreePassAcc}{96.8\%}
\newcommand{\spiderVthreePassShare}{70.7\%}

\newcommand{\spiderVthreeCorrectWithheld}{8.8\%}
\newcommand{\spiderVthreeWrongWithheld}{35.6\%}

\newcommand{\spiderAgreeAuroc}{0.749}
\newcommand{\spiderAgreeAurocCI}{0.688--0.809}
\newcommand{\spiderAgreePassAcc}{95.5\%}
\newcommand{\spiderAgreePassShare}{90.4\%}

\newcommand{\spiderAgreeCorrectWithheld}{3.7\%}
\newcommand{\spiderAgreeWrongWithheld}{35.6\%}
\newcommand{\spiderAgreeEce}{0.057}
\newcommand{\spiderAgreeBrier}{0.066}
\newcommand{\spiderFittedAuroc}{0.729}
\newcommand{\spiderFittedAurocCI}{0.651--0.804}
\newcommand{\spiderFittedPassAcc}{94.7\%}
\newcommand{\spiderFittedPassShare}{82.9\%}

\newcommand{\spiderFittedCorrectWithheld}{12.1\%}
\newcommand{\spiderFittedWrongWithheld}{32.9\%}

\newcommand{\spiderPassAccGapPts}{+1.8}
\newcommand{\spiderEmptyFires}{3.8\%}
\newcommand{\spiderEmptyFiredN}{37}
\newcommand{\spiderEmptyRight}{73.0\%}
\newcommand{\spiderEmptyGapPts}{19.6}

\newcommand{\lblN}{1{,}430}
\newcommand{\lblAgree}{90.8\%}
\newcommand{\lblKappa}{0.721}
\newcommand{\lblUp}{114}
\newcommand{\lblDown}{18}
\newcommand{\exAtCap}{534}
\newcommand{\exAtCapRight}{70.4\%}
\newcommand{\exBandNine}{84.4\%}
\newcommand{\exBandEight}{62.5\%}
\newcommand{\exBandSeven}{36.5\%}

\newcommand{\exEDesignedPassShare}{58.5\%}
\newcommand{\exEDropGroundedAuroc}{0.631}
\newcommand{\exEDropGroundedPassAcc}{82.5\%}
\newcommand{\exEDropGroundedPassShare}{48.2\%}
\newcommand{\exEDropSchemaAuroc}{0.625}
\newcommand{\exEDropSchemaPassAcc}{79.9\%}
\newcommand{\exEDropSchemaPassShare}{56.1\%}
\newcommand{\exEDropCompleteAuroc}{0.624}
\newcommand{\exEDropCompletePassAcc}{78.1\%}
\newcommand{\exEDropCompletePassShare}{58.4\%}
\newcommand{\exEDropSafetyAuroc}{0.555}
\newcommand{\exEDropSafetyPassAcc}{75.5\%}
\newcommand{\exEDropSafetyPassShare}{62.2\%}
\newcommand{\exESafetyNoAuroc}{0.572}
\newcommand{\exESafetyNoPassAcc}{75.7\%}
\newcommand{\exESafetyNoPassShare}{61.8\%}
\newcommand{\exESafetyMeanAuroc}{0.620}
\newcommand{\exESafetyMeanPassAcc}{75.8\%}
\newcommand{\exESafetyMeanPassShare}{61.8\%}
\newcommand{\exENoThinCapAuroc}{0.624}
\newcommand{\exENoThinCapPassAcc}{77.9\%}
\newcommand{\exENoThinCapPassShare}{58.5\%}
\newcommand{\exENoHardCapsAuroc}{0.638}
\newcommand{\exENoHardCapsPassAcc}{77.9\%}
\newcommand{\exENoHardCapsPassShare}{58.5\%}
\newcommand{\exEUnsupportedAnswerOnlyAuroc}{0.691}
\newcommand{\exEUnsupportedAnswerOnlyPassAcc}{77.9\%}
\newcommand{\exEUnsupportedAnswerOnlyPassShare}{78.4\%}
\newcommand{\exEUnsupportedIgnoredAuroc}{0.699}
\newcommand{\exEUnsupportedIgnoredPassAcc}{77.0\%}
\newcommand{\exEUnsupportedIgnoredPassShare}{83.8\%}

\newcommand{\exMaxMinorDelta}{0.007}

\newcommand{\exFireCapUnsupportedAnyN}{455}
\newcommand{\exFireCapUnsupportedAnyShare}{30.0\%}
\newcommand{\exFireCapUnsupportedAnyRight}{71.9\%}
\newcommand{\exFireUnsupportedCommentaryN}{360}
\newcommand{\exFireUnsupportedCommentaryShare}{23.8\%}
\newcommand{\exFireUnsupportedCommentaryRight}{75.6\%}

\newcommand{\exFireJudgeIntentMismatchRight}{32.5\%}

\newcommand{\exFireCapBadJoinRight}{46.8\%}

\newcommand{\exFireJudgeWrongFormulaRight}{44.7\%}
\newcommand{\exFireRareMax}{5.5\%}
\newcommand{\exPlanted}{480}
\newcommand{\exControlsOk}{480}
\newcommand{\exInjFakeTableWithheld}{100.0\%}
\newcommand{\exInjFakeTableNotPass}{100.0\%}
\newcommand{\exInjFakeTableOwnCheck}{100.0\%}
\newcommand{\exInjBadJoinWithheld}{100.0\%}
\newcommand{\exInjBadJoinNotPass}{100.0\%}
\newcommand{\exInjBadJoinOwnCheck}{100.0\%}
\newcommand{\exInjStandInWithheld}{100.0\%}
\newcommand{\exInjStandInNotPass}{100.0\%}
\newcommand{\exInjStandInOwnCheck}{100.0\%}
\newcommand{\exInjWrongNumberWithheld}{100.0\%}
\newcommand{\exInjWrongNumberNotPass}{100.0\%}
\newcommand{\exInjWrongNumberOwnCheck}{100.0\%}
\newcommand{\exInjTimeWindowWithheld}{90.0\%}
\newcommand{\exInjTimeWindowNotPass}{95.0\%}
\newcommand{\exInjTimeWindowOwnCheck}{83.3\%}
\newcommand{\exInjWrongAggregateWithheld}{75.0\%}
\newcommand{\exInjWrongAggregateNotPass}{81.7\%}
\newcommand{\exInjWrongAggregateOwnCheck}{96.7\%}
\newcommand{\exInjIntentFlipWithheld}{75.0\%}
\newcommand{\exInjIntentFlipNotPass}{91.7\%}
\newcommand{\exInjIntentFlipOwnCheck}{63.3\%}
\newcommand{\exInjPartialAnswerWithheld}{70.0\%}
\newcommand{\exInjPartialAnswerNotPass}{75.0\%}
\newcommand{\exInjPartialAnswerOwnCheck}{55.0\%}
\newcommand{\exInjJudgeRange}{70--90\%}

\newcommand{\abBirdExpDStrict}{+0.006}
\newcommand{\abBirdExpDStrictCI}{-0.002 to +0.014}
\newcommand{\abBirdExpDRelaxed}{+0.006}

\newcommand{\abBirdExpStL}{0.696}
\newcommand{\abBirdExpStSafety}{0.620}
\newcommand{\abBirdExpStWeighted}{0.638}
\newcommand{\abBirdExpStDesigned}{0.624}

\newcommand{\abBirdExpEmDz}{0.627}
\newcommand{\abBirdExpEmAg}{0.692}
\newcommand{\abBirdExpEmAf}{0.692}

\newcommand{\abBirdReDStrict}{+0.019}
\newcommand{\abBirdReDStrictCI}{+0.006 to +0.032}
\newcommand{\abBirdReDRelaxed}{+0.017}

\newcommand{\abBirdReDzPassRight}{593}
\newcommand{\abBirdReDzPassWrong}{98}
\newcommand{\abBirdReDzFlagRight}{24}
\newcommand{\abBirdReDzFlagWrong}{17}
\newcommand{\abBirdReDzAbstRight}{558}
\newcommand{\abBirdReDzAbstWrong}{134}
\newcommand{\abBirdReAgPassRight}{970}
\newcommand{\abBirdReAgPassWrong}{107}
\newcommand{\abBirdReAgFlagRight}{95}
\newcommand{\abBirdReAgFlagWrong}{35}
\newcommand{\abBirdReAgAbstRight}{110}
\newcommand{\abBirdReAgAbstWrong}{107}
\newcommand{\abBirdReAfPassRight}{752}
\newcommand{\abBirdReAfPassWrong}{61}
\newcommand{\abBirdReAfFlagRight}{289}
\newcommand{\abBirdReAfFlagWrong}{74}
\newcommand{\abBirdReAfAbstRight}{134}
\newcommand{\abBirdReAfAbstWrong}{114}

\newcommand{\abBirdReMovedAbstRight}{22}
\newcommand{\abBirdReMovedAbstWrong}{7}

\newcommand{\abBirdRePadAg}{+2.4}
\newcommand{\abBirdRePadAgCI}{+1.2 to +3.7}
\newcommand{\abBirdReStL}{0.715}
\newcommand{\abBirdReStSafety}{0.583}
\newcommand{\abBirdReStWeighted}{0.606}
\newcommand{\abBirdReStDesigned}{0.589}

\newcommand{\abBirdReEmDz}{0.574}
\newcommand{\abBirdReEmAg}{0.690}
\newcommand{\abBirdReEmAf}{0.699}

\newcommand{\abBirdReHeld}{264}

\newcommand{\abBirdReHeldWrongN}{46}

\newcommand{\abSpiderDStrict}{+0.011}
\newcommand{\abSpiderDStrictCI}{-0.001 to +0.023}
\newcommand{\abSpiderDRelaxed}{+0.006}

\newcommand{\abSpiderDzPassRight}{459}
\newcommand{\abSpiderDzPassWrong}{30}
\newcommand{\abSpiderDzFlagRight}{5}
\newcommand{\abSpiderDzFlagWrong}{3}
\newcommand{\abSpiderDzAbstRight}{446}
\newcommand{\abSpiderDzAbstWrong}{40}
\newcommand{\abSpiderAgPassRight}{849}
\newcommand{\abSpiderAgPassWrong}{40}
\newcommand{\abSpiderAgFlagRight}{27}
\newcommand{\abSpiderAgFlagWrong}{7}
\newcommand{\abSpiderAgAbstRight}{34}
\newcommand{\abSpiderAgAbstWrong}{26}
\newcommand{\abSpiderAfPassRight}{692}
\newcommand{\abSpiderAfPassWrong}{31}
\newcommand{\abSpiderAfFlagRight}{138}
\newcommand{\abSpiderAfFlagWrong}{16}
\newcommand{\abSpiderAfAbstRight}{80}
\newcommand{\abSpiderAfAbstWrong}{26}

\newcommand{\abSpiderMovedFlagWrong}{9}
\newcommand{\abSpiderMovedAbstRight}{46}

\newcommand{\abSpiderPadDzCI}{+0.3 to +3.4}
\newcommand{\abSpiderPadAg}{+0.2}
\newcommand{\abSpiderPadAgCI}{-0.5 to +1.0}
\newcommand{\abSpiderStL}{0.725}
\newcommand{\abSpiderStSafety}{0.565}
\newcommand{\abSpiderStWeighted}{0.584}
\newcommand{\abSpiderStDesigned}{0.566}
\newcommand{\abSpiderTiesAtCap}{437}
\newcommand{\abSpiderEmN}{1{,}034}
\newcommand{\abSpiderEmAcc}{68.9\%}
\newcommand{\abSpiderEmDz}{0.551}
\newcommand{\abSpiderEmAg}{0.651}
\newcommand{\abSpiderEmAf}{0.669}

\newcommand{\abSpiderEmKDz}{0.557}
\newcommand{\abSpiderEmKAg}{0.639}
\newcommand{\abSpiderEmKAf}{0.663}

\newcommand{\abSpiderHeld}{166}

\newcommand{\abSpiderHeldWrongN}{9}
\newcommand{\abSpiderHeldRightN}{157}

%% file: review_numbers.tex
\newcommand{\rvBirdExpJudgeAuroc}{0.578}
\newcommand{\rvBirdExpJudgeAurocCI}{0.552--0.604}

\newcommand{\rvBirdExpJudgeMinusAgree}{-0.142}
\newcommand{\rvBirdExpJudgeMinusAgreeCI}{-0.178 to -0.108}
\newcommand{\rvBirdExpAgreeJudgeAuroc}{0.735}

\newcommand{\rvBirdExpEntGreedy}{0.715}
\newcommand{\rvBirdExpEntComp}{0.714}
\newcommand{\rvBirdExpEntDiffN}{16}

\newcommand{\rvBirdReJudgeAuroc}{0.625}
\newcommand{\rvBirdReJudgeAurocCI}{0.591--0.659}

\newcommand{\rvBirdReJudgeMinusAgree}{-0.114}
\newcommand{\rvBirdReJudgeMinusAgreeCI}{-0.157 to -0.068}
\newcommand{\rvBirdReAgreeJudgeAuroc}{0.759}

\newcommand{\rvBirdReBrierConst}{0.1443}

\newcommand{\rvBirdReBrierAgreeFirst}{0.1327}

\newcommand{\rvBirdReEntGreedy}{0.721}
\newcommand{\rvBirdReEntComp}{0.723}
\newcommand{\rvBirdReEntDiffN}{13}

\newcommand{\rvBirdReSecMedian}{17.8}
\newcommand{\rvBirdReSecPninety}{32.5}
\newcommand{\rvSpiderJudgeAuroc}{0.625}
\newcommand{\rvSpiderJudgeAurocCI}{0.567--0.684}

\newcommand{\rvSpiderJudgeMinusAgree}{-0.122}
\newcommand{\rvSpiderJudgeMinusAgreeCI}{-0.186 to -0.061}
\newcommand{\rvSpiderAgreeJudgeAuroc}{0.758}

\newcommand{\rvSpiderBrierConst}{0.0687}

\newcommand{\rvSpiderBrierAgreeFirst}{0.0765}
\newcommand{\rvSpiderBrierAgree}{0.0662}

\newcommand{\rvSpiderEmAfMinusDz}{+0.106}
\newcommand{\rvSpiderEmAfMinusDzCI}{+0.069 to +0.143}
\newcommand{\rvSpiderEmAgMinusAf}{-0.024}
\newcommand{\rvSpiderEmAgMinusAfCI}{-0.050 to 0.000}
\newcommand{\rvSpiderEmVthreeMinusAg}{+0.031}
\newcommand{\rvSpiderEmVthreeMinusAgCI}{+0.006 to +0.059}
\newcommand{\rvSpiderAgMinusAf}{+0.025}
\newcommand{\rvSpiderAgMinusAfCI}{-0.008 to +0.052}
\newcommand{\rvSpiderVthreeMinusAg}{+0.017}
\newcommand{\rvSpiderVthreeMinusAgCI}{-0.022 to +0.060}
\newcommand{\rvSpiderEntGreedy}{0.743}
\newcommand{\rvSpiderEntComp}{0.744}
\newcommand{\rvSpiderEntDiffN}{1}

\newcommand{\rvSpiderSecMedian}{14.6}
\newcommand{\rvSpiderSecPninety}{25.6}
\newcommand{\rvTokCallsPerQ}{23.4}
\newcommand{\rvTokInPerQ}{171\,k}
\newcommand{\rvTokOutPerQ}{3.0\,k}
\newcommand{\rvTokJudgeInPerQ}{1.5\,k}
\newcommand{\rvTokJudgeOutPerQ}{1.6\,k}
\newcommand{\rvTokInPerSample}{34\,k}

%% file: review2_numbers.tex
\newcommand{\rtBirdExpClAfDz}{+0.105}
\newcommand{\rtBirdExpClAfDzCI}{+0.073 to +0.144}
\newcommand{\rtBirdExpClAgAf}{-0.008}
\newcommand{\rtBirdExpClAgAfCI}{-0.025 to +0.008}
\newcommand{\rtBirdExpClVtAg}{+0.018}
\newcommand{\rtBirdExpClVtAgCI}{+0.001 to +0.036}

\newcommand{\rtBirdReClAfDz}{+0.167}
\newcommand{\rtBirdReClAfDzCI}{+0.133 to +0.197}
\newcommand{\rtBirdReClAgAf}{-0.016}
\newcommand{\rtBirdReClAgAfCI}{-0.038 to +0.004}
\newcommand{\rtBirdReClVtAg}{+0.030}
\newcommand{\rtBirdReClVtAgCI}{+0.012 to +0.052}

\newcommand{\rtBirdReAfCW}{11.4\%}
\newcommand{\rtBirdReDzCWMatched}{40.2\%}

\newcommand{\rtSpiderClAfDz}{+0.161}
\newcommand{\rtSpiderClAfDzCI}{+0.098 to +0.234}
\newcommand{\rtSpiderClAgAf}{+0.025}
\newcommand{\rtSpiderClAgAfCI}{-0.022 to +0.066}
\newcommand{\rtSpiderClVtAg}{+0.017}
\newcommand{\rtSpiderClVtAgCI}{-0.042 to +0.089}

\newcommand{\rtSpiderAfWW}{35.6\%}
\newcommand{\rtSpiderAfCW}{8.8\%}
\newcommand{\rtSpiderDzCWMatched}{31.4\%}
\newcommand{\rtSpiderDzLoWW}{6.8\%}
\newcommand{\rtSpiderDzLoCW}{4.8\%}
\newcommand{\rtSpiderDzHiWW}{54.8\%}
\newcommand{\rtSpiderDzHiCW}{49.1\%}
\newcommand{\rtInjFakeTableCtrlWithheld}{35.0\%}

\newcommand{\rtInjFakeTableNewWithheldN}{39 of 39}

\newcommand{\rtInjBadJoinCtrlWithheld}{36.7\%}

\newcommand{\rtInjBadJoinNewWithheldN}{38 of 38}

\newcommand{\rtInjStandInCtrlWithheld}{40.0\%}

\newcommand{\rtInjStandInNewWithheldN}{36 of 36}

\newcommand{\rtInjWrongNumberCtrlWithheld}{40.0\%}

\newcommand{\rtInjWrongNumberNewWithheldN}{36 of 36}

\newcommand{\rtInjTimeWindowCtrlWithheld}{41.7\%}
\newcommand{\rtInjTimeWindowNewWithheld}{91.4\%}
\newcommand{\rtInjTimeWindowNewWithheldN}{32 of 35}

\newcommand{\rtInjWrongAggregateCtrlWithheld}{38.3\%}
\newcommand{\rtInjWrongAggregateNewWithheld}{70.3\%}
\newcommand{\rtInjWrongAggregateNewWithheldN}{26 of 37}
\newcommand{\rtInjWrongAggregateFiredButPass}{11}
\newcommand{\rtInjWrongAggregateFired}{58}
\newcommand{\rtInjIntentFlipCtrlWithheld}{33.3\%}

\newcommand{\rtInjIntentFlipNewWithheldN}{29 of 40}

\newcommand{\rtInjPartialAnswerCtrlWithheld}{33.3\%}

\newcommand{\rtInjPartialAnswerNewWithheldN}{28 of 40}

%% file: review3_numbers.tex
\newcommand{\rjBirdExpAuroc}{0.715}
\newcommand{\rjBirdExpAurocCI}{0.688--0.742}
\newcommand{\rjBirdExpMean}{0.771}
\newcommand{\rjBirdExpMinusAg}{-0.006}
\newcommand{\rjBirdExpMinusAgCI}{-0.042 to +0.030}
\newcommand{\rjBirdExpMeanMinusAg}{+0.050}
\newcommand{\rjBirdExpMeanMinusAgCI}{+0.031 to +0.070}

\newcommand{\rjBirdExpClMeanMinusAgCI}{+0.019 to +0.082}

\newcommand{\rjBirdExpEmMeanMinusAg}{+0.029}
\newcommand{\rjBirdExpEmMeanMinusAgCI}{+0.010 to +0.047}
\newcommand{\rjBirdReAuroc}{0.689}
\newcommand{\rjBirdReAurocCI}{0.653--0.723}
\newcommand{\rjBirdReMean}{0.769}
\newcommand{\rjBirdReMinusAg}{-0.049}
\newcommand{\rjBirdReMinusAgCI}{-0.094 to -0.007}
\newcommand{\rjBirdReMeanMinusAg}{+0.030}
\newcommand{\rjBirdReMeanMinusAgCI}{+0.009 to +0.052}

\newcommand{\rjBirdReClMeanMinusAgCI}{+0.012 to +0.050}

\newcommand{\rjBirdReEmMeanMinusAg}{+0.011}
\newcommand{\rjBirdReEmMeanMinusAgCI}{-0.007 to +0.029}
\newcommand{\rjSpiderAuroc}{0.805}
\newcommand{\rjSpiderAurocCI}{0.753--0.851}
\newcommand{\rjSpiderMean}{0.837}
\newcommand{\rjSpiderMinusAg}{+0.055}
\newcommand{\rjSpiderMinusAgCI}{-0.011 to +0.119}
\newcommand{\rjSpiderMeanMinusAg}{+0.088}
\newcommand{\rjSpiderMeanMinusAgCI}{+0.051 to +0.128}

\newcommand{\rjSpiderClMeanMinusAgCI}{+0.048 to +0.137}
\newcommand{\rjSpiderBrier}{0.0759}
\newcommand{\rjSpiderBrierConst}{0.0687}
\newcommand{\rjSpiderEmJudge}{0.605}
\newcommand{\rjSpiderEmAg}{0.639}

\newcommand{\rjSpiderEmMeanMinusAg}{+0.027}
\newcommand{\rjSpiderEmMeanMinusAgCI}{-0.002 to +0.056}

%% file: review4_numbers.tex
\newcommand{\rxN}{975}

\newcommand{\rxAgreeCross}{0.800}
\newcommand{\rxMeanJudgeCross}{0.846}
\newcommand{\rxAgreeCrossDeltaClCI}{+0.010 to +0.114}
\newcommand{\rxAgreeCrossDelta}{+0.052}
\newcommand{\rxTrioMinusMeanCI}{-0.009 to +0.025}

\newcommand{\rxEmAgreeCrossDelta}{+0.025}
\newcommand{\rxEmAgreeCrossDeltaCI}{+0.004 to +0.048}
\newcommand{\rxFires}{28.0\%}
\newcommand{\rxRightWhenFired}{81.0\%}
\newcommand{\rxVthreeAuroc}{0.786}
\newcommand{\rxVthreeAurocCI}{0.725--0.840}
\newcommand{\rxVthreeDelta}{+0.219}
\newcommand{\rxVthreeDeltaCI}{+0.138 to +0.291}
\newcommand{\rxVthreePassAcc}{97.6\%}
\newcommand{\rxVthreePassShare}{59.0\%}
\newcommand{\rxVthreeCorrectWithheld}{10.3\%}
\newcommand{\rxVthreeWrongWithheld}{46.6\%}
\newcommand{\rxAfSameAuroc}{0.727}
\newcommand{\rxAfSameWrongWithheld}{35.6\%}
\newcommand{\rxAfSameCorrectWithheld}{8.9\%}